\newtheorem{lemma}{Lemma}
\newtheorem{theorem}{Theorem}
\newtheorem{corollary}{Corollary}
\newtheorem{proposition}{Proposition}
\begin{document}

\title{On the Role of Entropy-Based Loss for Learning Causal Structure with Continuous Optimization}

\author{Weilin~Chen,
        Jie~Qiao,
        Ruichu~Cai\textsuperscript{*},~\IEEEmembership{Senoir Member,~IEEE,}
        and~Zhifeng~Hao,~\IEEEmembership{Member,~IEEE,}
\thanks{Manuscript received XXX; revised XXX. This research was supported in part by National Key R\&D Program of China (2021ZD0111501), National Science Fund for Excellent Young Scholars (62122022), Natural Science Foundation of China (61876043, 61976052, 62206064, 62206061), the major key project of PCL (PCL2021A12). We appreciate the comments from anonymous reviewers, which greatly helped to improve the paper. \textit{(Corresponding author: Ruichu Cai). The first two authors contributed equally to this work.}}
\thanks{W. Chen is with the School of Computer, Guangdong University of Technology, Guangzhou, 510006, China (e-mail: chenweilin.chn@gmail.com).}
\thanks{J. Qiao and R. Cai are with the School of Computer Science, Guangdong University of Technology, Guangzhou, 510006, China and Peng Cheng Laboratory, Shenzhen, China (e-mail: qiaojie.chn@gmail.com; cairuichu@gmail.com).}
\thanks{Z. Hao is with College of Science, Shantou University, Guangdong, 515063, China (e-mail: haozhifeng@stu.edu.cn).}
}

\markboth{Journal of \LaTeX\ Class Files,~Vol.~14, No.~8, August~2021}%
{Shell \MakeLowercase{\textit{et al.}}: A Sample Article Using IEEEtran.cls for IEEE Journals}


\maketitle


\begin{abstract}
  Causal discovery from observational data is an important but challenging task in many scientific fields. 
  A recent line of work formulates the structure learning problem as a continuous constrained optimization task using an algebraic characterization of directed acyclic graphs and the least-square loss function. 
  Though the least-square loss function is well justified under the standard Gaussian noise assumption, it is limited if the assumption does not hold. In this work, we theoretically show that the violation of the Gaussian noise assumption will hinder the causal direction identification, making the causal orientation fully determined by the causal strength as well as the variances of noises in the linear case and by the strong non-Gaussian noises in the nonlinear case. Consequently, we propose a more general entropy-based loss that is theoretically consistent with the likelihood score under any noise distribution. We run extensive empirical evaluations on both synthetic data and real-world data to validate the effectiveness of the proposed method and show that our method achieves the best in Structure Hamming Distance, False Discovery Rate, and True Positive Rate matrices.
\end{abstract}

\begin{IEEEkeywords}
Causal discovery, least-square loss, entropy-based loss, acyclicity constraint.\end{IEEEkeywords}

\section{Introduction}\label{sec:introduction}

\IEEEPARstart{L}{earning} causal structure from observational data has become an important topic in many scientific fields, such as economics \cite{ghysels2016testing}, biology \cite{grosse2016identification}, neuroinformatics\cite{9411670}, and social science \cite{cai2016understanding}. Due to the expensive cost or the ethic of randomized experiments, the task of causal discovery from purely observational data has drawn much attention. 

Many approaches have been proposed for learning causal structure. Traditionally, by utilizing the conditional independence property among observed variables, the constraint-based approaches have been proposed, e.g., PC algorithm \cite{spirtes2000causation}, but only identify the underlying directed acyclic graph (DAG) up to Markov equivalence class \cite{andersson1997characterization}. Alternatively, by introducing a certain class of Structure Causal Model (SCM), and further assuming the causal mechanism that the noise and the hypothetical cause are independent, the functional-based causal models have been proposed, e.g., the Linear Non-Gaussian Acyclic Model (LiNGAM) \cite{shimizu2006linear}, the Additive Noise Model (ANM) \cite{hoyer2008nonlinear}, the Post-Nonlinear (PNL) causal Model \cite{zhang2012identifiability}. However, due to the intractable search space superexponential in the number of graph nodes, learning DAGs using functional-based causal models is challenging.

Recently, Zheng et al. \cite{zheng2018dags} propose a  method named NOTEARS: Non-combinatorial Optimization via Trace Exponential and Augmented lagRangian for Structure learning, which formulates the causal discovery problem as a continuous optimization problem using least-square loss with a DAG constraint under the additive noise model assumption.
Such a technique has been extensively developed and applied to learning linear or nonlinear causal structures. Yu et al. \cite{yu2019dag} introduce a variational autoencoder framework for modeling the generative process of a causal structure equipped with evidence lower bound with a Gaussian prior of noise which is implemented by least-square loss. Ng et al. \cite{ng2019graph} and zheng et al. \cite{zheng2020learning} extend a linear causal model into a nonlinear causal model using neural networks but still rely on the least-square loss for reconstruction. 

However, our analysis shows that using least-square loss as the score function will hinder the causal direction identification. In a linear system, such loss will be highly influenced by the causal strengths and the noise variances leading to a bias estimation compared with the likelihood score. In a more general nonlinear system, the causal direction will be incorrectly identified if noise distribution has a strong non-Gaussianity property.

In this paper, we show that the entropy-based loss is consistent with the likelihood score under the additive noise model using any noise distribution, and thus we advocate using the entropy-based loss instead of least-square loss. Overall, our contributions are as follows: 
\begin{enumerate}
	\item In Section \ref{sec:Limitations}, we study the limitations of least-square loss and provide theoretical bounds that least-square loss will fail to identify the causal direction in a linear and nonlinear system, respectively.
	\item In Section \ref{sec:entropy}, we build a connection between the entropy-based loss and the log-likelihood score and further provide the theory for its validity.
	\item In Section \ref{sec:experiment}, we run extensive experiments over linear and nonlinear systems by using both synthetic data and real-world data and show that our method performs more effectively and stably.
\end{enumerate}

\section{Related Works}

\textbf{Related to least-square loss:} 
Bl{\"o}baum et al. \cite{blobaum2019analysis} address the problem of inferring the causal relation between two variables using least-square loss under the modularity property, i.e., the independence between the function and the distribution of cause. Peters et al. \cite{peters2014identifiability} prove the full identifiability of the linear Gaussian structural equation in the case that all noise variables have the same variance, which can be recovered by the least-square loss in our analysis. Loh et al. \cite{lincausalbyinverse} prove the identification of a linear Gaussian structural causal model with known noise variances. 

\textbf{Related to causal discovery:} One of the categories of causal discovery approaches is to learn causal structures by iteratively conducting independence tests, such as IC \cite{pearl1995theory}, PC \cite{spirtes2000causation}, M2LC \cite{9540220}, M3B \cite{8361079}. Another category of approaches is based on causal mechanism assumptions. Shimizu et al. \cite{shimizu2006linear} propose LiNGAM by assuming linearity and non-Gaussianity. Many variants improve the efficiency \cite{shimizu2011directlingam, xie2019efficient}, extend to time-series data \cite{hyvarinen2008causal}, or relax the causal sufficiency assumption \cite{chen2021causal}. Hoyer et al. \cite{hoyer2008nonlinear} propose ANM with additive noise assumption. Peters et al. \cite{peters2010identifying} study the identifiability of discrete ANM, and Wei et al. \cite{weicausal} extend the orientation rule of ANM by Wasserstein distance. Zhang et al. \cite{zhang2010distinguishing, zhang2012identifiability} propose PNL causal model with inner additive noise and give its identifiability condition. B{\"u}hlmann et al. \cite{buhlmann2014cam} propose the causal additive model (CAM) by assuming the structural equations are additive. 
Recently, based on the acyclic constraint, the gradient-based approaches has drawn attention.
Zheng et al. \cite{zheng2018dags} propose a DAG constraint named NOTEARS for learning causal structure with continuous optimization. Ng et al. \cite{ng2020role} apply soft sparsity and DAG constraints to learn linear DAGs based on a likelihood-based score function. Yu et al. \cite{yu2019dag} propose an alternative characterization of acyclicity and utilize a generative model to learn the nonlinear causal structure. Ng et al. \cite{ng2019graph} utilize a graph autoencoder framework to extend the linear SCM to nonlinear SCM. Ng et al. \cite{ng2019masked} extend to nonlinearity by putting the weighted matrix on the first layer of MLP. Lachapelle et al. \cite{lachapelle2019gradient} express weighted matrix by neural network paths to deal with the nonlinear case. Zheng et al. \cite{zheng2020learning} propose a more general acyclicity constraint based on partial derivatives to support nonlinear models. Liang et al. \cite{10032707} propose the local structures learning algorithm GraN-LCS based on the DAG constraint. Wren et al.\cite{wren2022learning} learn discrete DAG using such a constraint. Moreover, many methods combine NOTEARS with other domains. Zhu et al. \cite{zhu2019causal} use Reinforcement Learning to recover the causal structure with the best scoring under the acyclicity constraint. Pamfil et al. \cite{pamfil2020dynotears} extend to recovering the causal structure on time-series data. Brouillard et al. \cite{brouillard2020differentiable} make use of interventional data for differentiable causal discovery. Wehenkel et al. \cite{wehenkel2021graphical} utilize the normalizing flow to learn the causal mechanism with the DAG constraint. Zeng et al. \cite{ijcai2021-289} extend to learning the causal structure on multi-domain data. Faria et al.\cite{pmlr-v177-faria22a} learn causal structures under latent intervention based on DAG constraint. Ng et al. \cite{ng2022towards} propose a federated Bayesian network structure learning method using such a DAG constraint. Yang et al. \cite{yang2021causalvae} and Mao et al. \cite{mao2022towards} apply such a DAG constraint to learn the causal representation. Zhang et al.  \cite{zhang2022nonparametric} apply DAG constraint to learn a forest-structured neural topic model to capture relationships between topics.

Our work is different from the works above. Many mentioned works \cite{zheng2018dags, ng2020role, yu2019dag, ng2019graph, ng2019masked, lachapelle2019gradient,pamfil2020dynotears, ng2022towards} are based on the least-square loss to orient causal direction. However, our analysis shows that it has no identifiability for causal discovery (see Section \ref{sec:Limitations}), and we propose a better entropy-based loss that identifies causal direction correctly (see Section \ref{sec:entropy}).

\section{Preliminary}

\subsection{Problem Formulation}
We tackle the problems of multivariate causal discovery, aiming to learn a directed acyclic graph (DAG) $\mathcal{G}$ that represents a joint distribution $p(\Vec{X})$ over a set of random variables $\Vec{X}=\{X_1, X_2,\dots, X_d\}$. A directed arrow from $X_i$ to $X_j$ represents the cause-effect relation between two variables, where we say $X_i$ is the parent of $X_j$. Let $X_{pa\left(i\right)}$ denote all direct parents of $X_i$ with respect to a DAG $\mathcal{G}$. The dependence relation between $X_i$ and $X_{pa\left(i\right)}$ can be represented by a conditional distribution $p(X_i|X_{pa\left(i\right)})$. Given a set of observational data $\left\{ x_1^{(i)},\dots,x_d^{(i)}\right\}_{i=1}^m $ which is sampled from a joint distribution $p(\Vec{X})$,  we assume that the joint distribution is Markov with respect to a ground truth DAG $\mathcal{G}$ and can be factorized as $p(\Vec{X})=\prod_{i=1}^dp(X_i|X_{pa\left(i\right)})$. Then the corresponding design matrix is $\mathbf{X} \in \mathbb{R}^{m\times d}$. In this paper, we focus on the task of learning the underlying DAG given observational data. The mathematical notations and the corresponding descriptions
are summarized in Table \ref{tab:notation} .

\begin{table}[!t]
\renewcommand{\arraystretch}{1.0}
\caption{Mathematical notation and descriptions}
\label{tab:notation}
\centering
\resizebox{\linewidth}{!}{
\begin{tabular}{l|l}
\hline
Notation & Description \\
\hline
$\mathcal{G}$ & Directed acyclic graph (DAG) \\
\hline
$\Vec{X}$ &  Random variables vector $\Vec{X}=\{X_1, X_2,\dots,X_d\}$  \\
\hline
$X_i$ & The $i$th variable (element) of $\Vec{X}$ \\
\hline
$\Vec{N}$ &  Random variables vector $\Vec{N}=\{N_1, N_2,\dots,N_d\}$  \\
\hline
$N_{i}$ & The noise variable of $X_i$ \\
\hline
$X_{pa\left(i\right)}$ & The parent variable of $X_i$ \\
\hline
$f_{i}$ & The mapping from $X_{pa\left(i\right)}$  to $X_i$ \\
\hline
$\mathbf{X}$ & The design matrix, $\mathbf{X} \in \mathbb{R}^{m\times d}$ \\
\hline
$X$ & The cause variable in pairwise $X$, $Y$   \\
\hline
$Y$ & The effect variable in pairwise $X$, $Y$   \\
\hline
$N_X$ &  The noise of cause variable  \\
\hline
$N_Y$ &  The noise of effect variable  \\
\hline
$f$ &  The nonlinear mapping from $X$ to $Y$ in causal direction \\
\hline
$g$ &  The nonlinear mapping from $Y$ to $X$ in anti-causal direction  \\
\hline
$\hat{N}_X$ &  The residual of $X$ denoted in anti-causal direction,\\
 & $\hat{N}_X := X-g(Y) $  or $\hat{N}_X := X- \hat{a}Y $ \\ 
\hline
$\hat{N}_Y$ &  The residual of $Y$ denoted in anti-causal direction, $\hat{N}_Y := Y$  \\
\hline
$p$ & Probability density function  \\
\hline
$q$ & The probability density function of standard Gaussian distribution  \\
\hline
$I(X,Y)$ & The mutual information between variable $X$ and $Y$ \\
\hline
$H(X)$ & The entropy of variable $X$ \\
\hline
$\sigma_X^2$ & The variance of variable $X$ \\
\hline
\end{tabular}
}
\end{table}

\subsection{Additive Noise Model}
Causal discovery problems can be formalized by a Structure Causal Model (SCM) \cite{didelez2001judea}. Given a set of random variables $ \Vec{X}=\left\{X_1,X_2,\dots,X_d\right\}$ and the corresponding noises $\Vec{N}=\left\{N_1,N_2,\dots,N_d\right\}$, SCM between $X_{i}$ and its direct parents $X_{pa\left(i\right)}$ with respect to a DAG $\mathcal{G}$ is defined as $X_i = f_i \left(X_{pa\left(i\right)}, N_i\right) $, where $f_i$ could be a linear or nonlinear function. Following the previous works \cite{zheng2018dags}, we assume the causal sufficiency and consider the Additive Noise Model (ANM): 
\begin{equation*}
    X_i = f_i \left( X_{pa\left(i\right)} \right) + N_i,
\end{equation*}
where $N_i$ is the exogenous additive noise and $N_i \Vbar X_{pa\left(i\right)}$.
The identifiability of ANM depends on the asymmetry independence property, i.e., independence between the causes and the noise but not vice versa. Such an asymmetry greatly motivates our entropy-based loss for causal orientation in Section \ref{sec:entropy}.

\subsection{NOTEARS}

 Zheng et al. \cite{zheng2018dags} proposed a continuous optimization for learning causal structure, using least-square loss with an acyclicity constraint.
In particular, the directed graph in a linear SCM can be encoded by a weighted adjacency matrix $W \in \mathbb{R}^{d \times d}$, i.e., $ \Vec{X} =   W^{\mathsf{T}}\Vec{X} + \vec{N}$. NOTEARS shows that $W$ represents a DAG if and only if $tr\left(e^{W \circ W}\right)-d = 0$ holds, where $ \circ $ denotes the Hadamard product. Then, NOTEARS formulates the causal structure learning problem as the following continuous optimization problem:
\begin{equation}
  \begin{aligned}
  	& \mathop{\arg\min}_{W \in \mathbb{R}^{d \times d} } \frac{1}{2m} \| \mathbf{X} - \mathbf{X} W \|^2_F + \lambda \left\|W\right\|_1 \\
  	& \text{subject to} \quad tr\left(e^{W \circ W}\right)-d = 0,
  \end{aligned}
\end{equation}
where $ \frac{1}{2m} \| \mathbf{X} - \mathbf{X} W \|^2_F$ is the least-square loss and is equal, up to constant, to the log-likelihood score of a linear Gaussian DAG with equal noise variances, and $\left\|W\right\|_1$ denotes the $\ell_1$ penalty term on the causal structure.

\subsection{Differential Entropy Estimator}

To estimate the entropy in continuous optimization, we use the estimator proposed by Hyv{\"a}rinen \cite{10.5555/302528.302606}. In detail, let $H\left(X\right) = - \int p\left(X\right)\log p\left(X\right) dX$ denote the entropy of variable $X$. It gives a way to estimate entropy as follows:
\begin{equation}
	\begin{aligned}
		H(X) \approx  H(\nu) -\left[  k_{1}\left(E\left\{\bar{G}_{1}(X)\right\}\right)^{2} 
		\right. \\ \phantom{=\;\;}\left.
		+k_{2}\left(E\left\{\bar{G}_{2}(X)\right\}-E\left\{\bar{G}_{2}(\nu)\right\}\right)^{2}\right],
	\end{aligned}
\end{equation}
where $k_{1} = 36/\left(8\sqrt{3}-9\right)$, $k_{2} = 24/\left(16\sqrt{3}-27\right)$, $\bar{G}_{1}(X) = X\exp\left(-X^{2}/2\right)$, $\bar{G}_{2}(X) = \exp\left(-X^{2}/2\right)$, $\bar{G}_{2}(\nu) = \sqrt{1/2}$, and $H(\nu) = \frac{1}{2}\left(1+\log(2\pi)\right)$. For more details, see \cite{10.5555/302528.302606}.

\section{Limitations of Least-Square Loss} \label{sec:Limitations}

\begin{figure*}[t] 
	\centering
	\subfloat[Loss difference varies with various causal strength]{ 
		\includegraphics[width=0.3\textwidth]{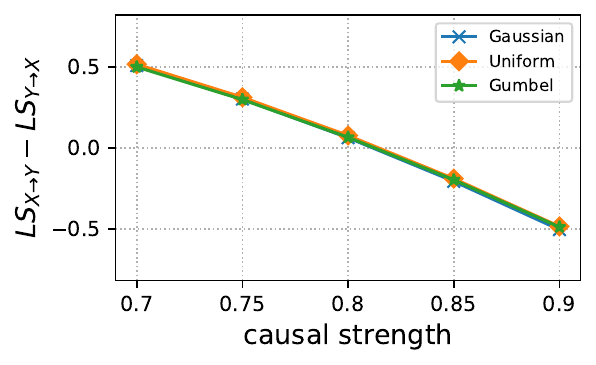} 
		\label{figure 1a}
	}
	\hfil
	\subfloat[Loss difference varies with various variance of $N_{X}$]{
		\includegraphics[width=0.3\textwidth]{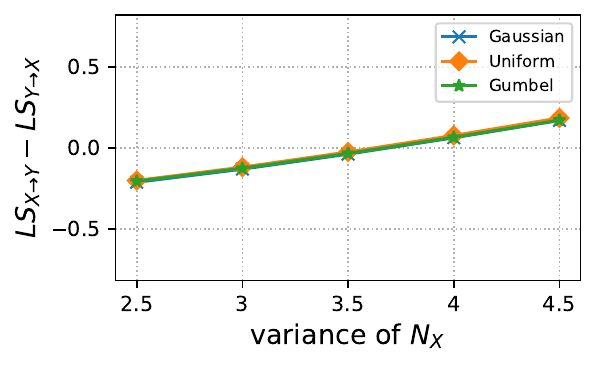}
		\label{figure 1b}
	}
	\hfil
	\subfloat[Loss difference varies with various variance of $N_{Y}$]{
		\includegraphics[width=0.3\textwidth]{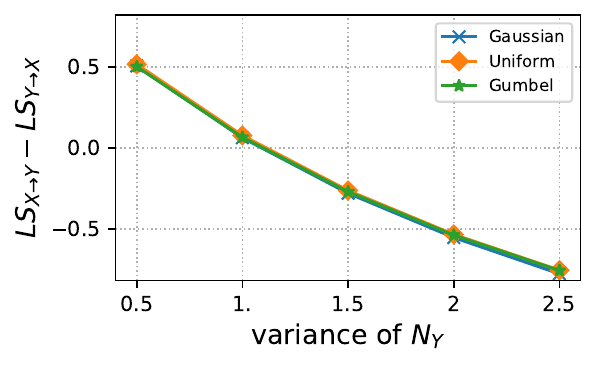}
		\label{figure 1c}
	}
	
	\caption{Control experiments of loss difference, i.e, $LS_{X \to Y} - LS_{Y\to X}$, with respect to different conditions in the linear system. At each experiment, we only vary one of the parameters while fixing the others. The default parameters are as follows: $\alpha=0.85$, $\sigma_{N_{X}}^2=4$, and $\sigma_{N_{Y}}^2 = 1$. The controlled parameters will vary in the range as follows: $\alpha \in [0.7,0.9]$, $\sigma_{N_{X}}^2 \in [2.5,4.5]$, $\sigma_{N_{Y}}^2\in [0.5,2.5]$.}
	 \label{figure 1} 
\end{figure*}

This section discusses why the least-square loss based methods fail to discover the underlying causal structure, and we theoretically show the conditions that they fail in both linear and nonlinear cases.

Without loss of generality, we analyze the causal pair of two variables and assume that:
\begin{equation} \label{eq:causal anm}
	\begin{aligned}
		X & =N_{X} ,\\
		Y & =\begin{cases}
			\alpha X+N_{Y}, & \text{Linear}\\
			f(X)+N_{Y}. & \text{Nonlinear}
		\end{cases}
	\end{aligned}
\end{equation}
where the variance $\text{Var}(N_{X})=\sigma_{N_{X}}^{2}$, $\text{Var}(N_{Y})=\sigma_{N_{Y}}^{2}$, $N_{X} \Vbar N_{Y}$, and $f$ is a nonlinear function. 

Without loss of generality, we denote the additive noise model in the anti-causal direction as:
\begin{equation} \label{eq:anti-causal anm}
	\begin{aligned}
		Y & =\hat{N}_{Y} ,\\
		X & =\begin{cases}
			\hat{\alpha} Y+ \hat{N}_{X}, & \text{Linear}\\
			g(Y)+ \hat{N}_{X}. & \text{Nonlinear}
		\end{cases}
	\end{aligned}
\end{equation}
where the noises are not independent $\hat{N}_X \nVbar \hat{N}_Y$. Then, using the least-square loss, one may identify the causal direction by comparing the difference of least-square loss between $LS_{X\to Y}$ and $LS_{Y\to X}$ on the causal direction and anti-causal discretion, receptively, and taking the smaller one as the causal direction.

However, one of the main issues of using least-square loss for causal discovery is that the score in each direction can be easily manipulated by the causal strength or the variance of noises. As shown in Figure \ref{figure 1}, we run linear regression using the least-square loss 100 times in each experiment and show the difference of least-square loss between the causal direction and the anti-causal direction with different causal strengths and variance of noises in linear Gaussian, Uniform, and Gumbel data, respectively. We can see that the least-square loss, even fails to identify the causal direction under the non-Gaussian distribution that is notably identifiable \cite{shimizu2006linear}. Thus, the least-square loss should not be a proper criterion for identifying the causal direction and a new criterion is urgently needed for methods like NOTEARS. To achieve this, we first show why the least-square loss is not a proper criterion in the following sub-sections and propose a corresponding remedy in Section \ref{sec:entropy}.

\subsection{Linear Case}

In this section, we show that using the least-square loss, the identifiability of linear ANM can be fully determined by its causal strength and the noise variance. 

To be clear, let $X \rightarrow Y$ denote the causal direction from cause $X$ to effect $Y$, and let $Y \rightarrow X$ denote the anti-causal direction. Let $\sigma_{X}^{2}$ denote the variance of variable $X$. Let $\beta_{Y|X}$ be the linear regression coefficient obtained from regressing $Y$ on $X$, and let $\sigma_{Y|X}^{2} = \text{Var}(Y - \beta _{Y|X} \cdot X)$ be the variance of the residual of regressing $Y$ on $X$. The least-square loss in causal direction $LS_{X\rightarrow Y}$ and in the anti-causal direction $LS_{X\rightarrow Y}$ can be represented by:
\begin{equation}
    \begin{aligned}
    LS_{X\rightarrow Y}&  = \mathbb{E}\left[( X-0)^{2}\right] +\mathbb{E}\left[( Y-\beta _{Y|X} \cdot X)^{2}\right], \\
    LS_{Y\rightarrow X}&  = \mathbb{E}\left[( X-\beta _{X|Y} \cdot Y)^{2}\right] +\mathbb{E}\left[( Y-0)^{2}\right]. \\
    \end{aligned}
\end{equation}
By analyzing the square-least loss in different directions, we can conclude the following theorem.

\begin{theorem} \label{theorem mseLinear}
	Let $X \to Y$ be the causal direction following the data generation mechanism $X=N_X, Y = \alpha X + N_Y$.  The least-square loss will fail to identify the correct causal direction if the causal strength $\alpha$, the noise variances $\sigma_{N_X}^{2}$ and $\sigma_{N_Y}^{2}$ satisfy the following inequality:
	\begin{equation} \label{inequality mselinear}
		\alpha ^{2} < 1 - \frac{\sigma_{N_Y}^{2}}{\sigma_{N_X}^{2}}.
	\end{equation}
	
\end{theorem}

\begin{proof}
	For the causal direction, we have:
	\begin{equation} \label{equ:linear causal}
		\begin{array}{c}
			\sigma_{X|\phi}^{2} = \sigma_{N_X}^{2}, \quad
			\beta_{Y|X} = \frac{\alpha \sigma_{X}^{2}}{\sigma_{X}^{2}} = \alpha, \quad
			\sigma_{Y|X}^{2} = \sigma_{N_Y}^{2},
		\end{array} 
	\end{equation}
	For the anti-causal direction, we have
	\begin{equation}	\label{equ:linear anti-causal}
		\begin{aligned}
			\sigma ^{2}_{Y|\phi } & =\alpha ^{2} \sigma ^{2}_{N_{X}} +\sigma ^{2}_{N_{Y}} ,\quad \beta _{X|Y} =\frac{\alpha \sigma ^{2}_{N_{X}}}{\sigma ^{2}_{Y|\phi }} =\frac{\alpha \sigma ^{2}_{N_{X}}}{\alpha ^{2} \sigma ^{2}_{N_{X}} +\sigma ^{2}_{N_{Y}}} ,\\
			\sigma ^{2}_{X|Y} & = \text{Var} (X-\beta _{X|Y} Y)=\frac{\sigma ^{2}_{N_{X}} \sigma ^{2}_{N_{Y}}}{\alpha ^{2} \sigma ^{2}_{N_{X}} +\sigma ^{2}_{N_{Y}}} .
		\end{aligned}
	\end{equation}
	Thus, using Eq. \ref{equ:linear causal}, the least-square loss in the direction of $\displaystyle X\rightarrow Y$ is given as follows:
	\begin{equation} \label{equ:mse linear causal}
		LS_{X\rightarrow Y} =\mathbb{E}\left[( X-0)^{2}\right] +\mathbb{E}\left[( Y-\beta _{Y|X} \cdot X)^{2}\right] =\sigma ^{2}_{N_{X}} +\sigma ^{2}_{N_{Y}} ,
	\end{equation}
	where  $\displaystyle \beta _{Y|X}$ is the coefficient obtained from linear regression. Intuitively, the least-square loss is to calculate the sum of the residual's variance, and therefore the loss in the causal direction is equal to the sum of noise variance. Similarly, using Eq. \ref{equ:linear anti-causal}, we obtain
	\begin{equation}\label{equ:mse linear anti-causal}
	  \begin{aligned}
		LS_{Y\rightarrow X} & =\mathbb{E}\left[( X-\beta _{X|Y} \cdot Y)^{2}\right] +\mathbb{E}\left[( Y-0)^{2}\right]  \\
        & =\frac{\sigma ^{2}_{N_{Y}} \sigma ^{2}_{N_{X}}}{\alpha ^{2} \sigma ^{2}_{N_{X}} +\sigma ^{2}_{N_{Y}}} +\alpha ^{2} \sigma ^{2}_{N_{X}} +\sigma ^{2}_{N_{Y}} .
	  \end{aligned}
	\end{equation}
	We can see that different from the least-square loss in the causal direction, the least-square loss in the anti-causal direction has much more complicated terms, which
	is closely related to the causal strength and the noise variance. Thus, if we consider the condition that the loss in the causal direction $X \rightarrow Y$, is larger than the loss in the anti-causal direction $Y \rightarrow X$, i.e., $LS_{X\rightarrow Y}  >LS_{Y\rightarrow X}$, using Eq. \ref{equ:mse linear causal} and \ref{equ:mse linear anti-causal}, we have the following inequality:
	\begin{equation*} 
		\begin{aligned}
			\sigma ^{2}_{N_{X}} +\sigma ^{2}_{N_{Y}}  & > \frac{\sigma ^{2}_{N_{Y}} \sigma ^{2}_{N_{X}}}{\alpha ^{2} \sigma ^{2}_{N_{X}} +\sigma ^{2}_{N_{Y}}} +\alpha ^{2} \sigma ^{2}_{N_{X}} +\sigma ^{2}_{N_{Y}} \\
			\alpha ^{2} & < \frac{\alpha ^{2} \sigma _{N_{X}}^{2} +\sigma _{N_{Y}}^{2} -\sigma _{N_{Y}}^{2}}{\alpha ^{2} \sigma _{N_{X}}^{2} +\sigma _{N_{Y}}^{2}}  \\
			\alpha ^{2} & < 1 - \frac{\sigma_{N_Y}^{2}}{\sigma_{N_X}^{2}}.
		\end{aligned}
	\end{equation*}
\end{proof}

A similar result has been shown in \cite{park2020identifiability} from the view of conditional variances, of which the main contribution is to prove the identifiability of additive noise model with unknown heterogeneous error variances.

Theorem \ref{theorem mseLinear} indicates that, under the linear additive noise model assumption, the least-square loss fails to identify the correct causal direction unless the inequality \ref{inequality mselinear} does not hold. It is also interesting to see that if all noises follow the standard Gaussian distribution, we have $\sigma_{N_X}^{2} = \sigma_{N_Y}^{2}$, then the causal direction can still be correctly identified because the inequality $\alpha^2<0$ must not hold. However, it is unrealistic that all noises follow the same standard Gaussian distribution in the real world. And a simple standardization that is usually used for preprocessing will invalidate the assumption. 

Moreover, Theorem \ref{theorem mseLinear} also explains the phenomenon in Figure \ref{figure 1}. For example, based on Eq. \ref{equ:mse linear causal} and Eq. \ref{equ:mse linear anti-causal}, given a fix $\sigma_{N_X}^2=4$ and $\sigma_{N_Y}^2=1$, then we have $LS_{X\to Y}=5$ and $LS_{Y\to X}=\frac{4}{4\alpha^2+1}+4\alpha^2+1$. As a result, the $LS_{Y\to X}$ will be even smaller than $LS_{X\to Y}$ if we have $|\alpha|<\frac{\sqrt{3}}{2}\approx0.866$, and reaches its minimum $LS_{Y\to X}=4$ when $|\alpha|=0.5$. As a result, the causal direction will be misdirected as the anti-causal direction has a smaller least-square loss.

Therefore, we conclude that the least-square loss is not a suitable score for learning linear causal structures.

\subsection{Nonlinear Case}	\label{sec:nonlinear}

In this section, we show that using the least-square loss, the identifiability of ANM is closely related to the measured mutual information between the cause and regression residual in the nonlinear case. 

Specifically, we study SCM between $X \to Y$ with the nonlinear additive form $Y=f(X)+N_{Y}$. To show the limitation of least-square loss in the nonlinear case, we aim to show that using the least-square loss can be viewed as minimizing the mutual information between the noise and cause variables but with bias such that the independence measure will be not reliable, leading to identifying the wrong direction, which is the main theoretical result of this section, i.e., Theorem \ref{theorem:mse nonlinear}. The proof of Theorem \ref{theorem:mse nonlinear} depends on Lemma \ref{lemma 1}, \ref{lemma 2}, and \ref{lemma 3}. To clearly show how these theories connect, we provide the proof sketch in Figure \ref{fig:proof sketch}. 

\begin{figure}[!t] 
	\centering
	\includegraphics[width=0.45\textwidth]{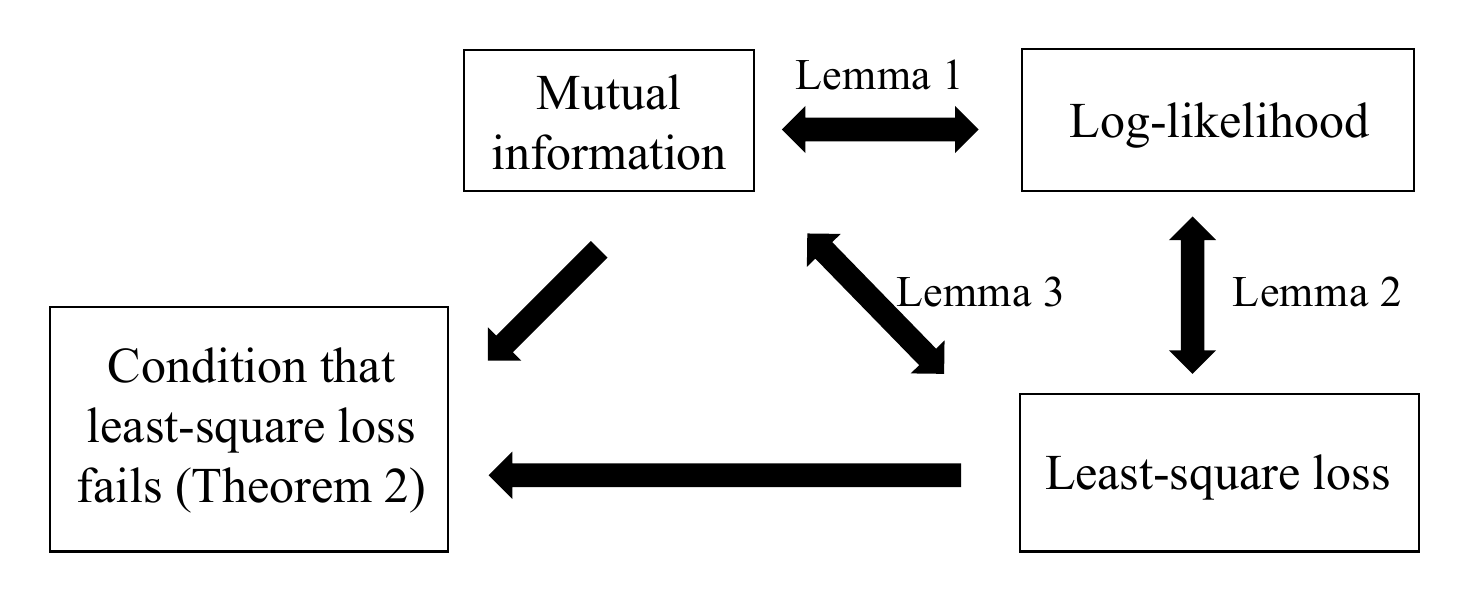} 	
	\caption{Proof sketch in Section \ref{sec:nonlinear}. We build the connection between mutual information, log-likelihood, and least-square loss. Based on such a connection, we develop the condition that least-square loss fails to identify the causal direction (Theorem \ref{theorem:mse nonlinear}).} \label{fig:proof sketch}
\end{figure}

\textbf{Proof Sketch}: First, we show that maximizing log-likelihood is equivalent to minimizing the mutual information between the cause and the noise variables (Lemma \ref{lemma 1}). Second, we further show that maximizing the likelihood under the standard Gaussian noise assumption is equivalent to minimizing the least-square loss (Lemma \ref{lemma 2}). Third, by combing Lemma \ref{lemma 1} and Lemma \ref{lemma 2}, we conclude that using least-square loss, the mutual information will be biased (Lemma \ref{lemma 3}). Finally, by considering the bias in mutual information, we provide the condition that least-square loss fails to identify the causal direction in Theorem \ref{theorem:mse nonlinear}.

\begin{lemma} \label{lemma 1} 
	Given the samples $\left\{x^{( i)} ,y^{( i)}\right\}^{m}_{i=1}$ and causal model $Y=f(X;\theta)+N_{Y} $ with any parameters $\displaystyle \theta $ and as $m \rightarrow \infty$, the average log-likelihood $l_{X\rightarrow Y} (\theta )$ and the mutual information between $X$ and $N_Y$ are related in the following way:
	\begin{equation} \label{main eq:mutual information vs likelihood}
		I(X,N_{Y} ;\theta )
		= \mathbb{E}_{x,y\sim p(X,Y)} \left[ p(X=x,Y=y) \right] - \lim_{m \to \infty} l_{X\rightarrow Y} (\theta ),
	\end{equation}
	where
	\begin{equation} \label{main eq:general_likelihood}
	  \begin{aligned}
		&l_{X\rightarrow Y} (\theta ) \\
		=& 
		\frac{1}{m} \sum ^{m}_{i=1}\log p(X=x^{( i)} )
		+ \frac{1}{m}\sum ^{m}_{i=1}\log p(N_{Y}=y^{(i)}-f(x^{(i)}) ;\theta ) .
	  \end{aligned}
	\end{equation}
\end{lemma}

The proof is in Appendix \ref{app_sec: proof of lemma 1}. Lemma \ref{lemma 1} indicates that in the pairwise ANM, maximizing the log-likelihood (Eq. \ref{main eq:general_likelihood}) is equivalent to minimizing the mutual information (Eq. \ref{main eq:mutual information vs likelihood}). 
Next, we further build a connection between the log-likelihood and the least-square loss in the following lemma.

\begin{lemma} \label{lemma 2}
	For the additive noise model $Y=f(X)+N_Y$, as $m \rightarrow \infty$, maximizing the average log-likelihood $l_{X\to Y}(\theta)$ with the standard Gaussian noise assumption,
	\begin{equation} \label{main eq:lik_gaussian}
		\begin{aligned}
			l_{X\rightarrow Y} (\theta ) = & \frac{1}{m}
			\sum ^{m}_{i=1}\log\left(\frac{1}{\sqrt{2\pi }}\exp\left( -\frac{(x^{( i)} -0)^{2}}{2}\right)\right) \\
			& + \frac{1}{m} \sum ^{m}_{i=1}\log\left(\frac{1}{\sqrt{2\pi }}\exp\left( -\frac{(y^{( i)} -f (x^{( i)} ;\theta ))^{2}}{2}\right)\right) ,
		\end{aligned}
	\end{equation}
	is equivalent to minimizing the least-square loss $LS_{X \to Y}$,
	\begin{equation} \label{main eq:lemma_lsloss}
		LS_{X\rightarrow Y} =\mathbb{E}\left[ (X-0)^{2}\right] +\mathbb{E}\left[ (Y-f (X))^{2}\right] .
	\end{equation}
\end{lemma}

The proof is in Appendix \ref{app_sec: proof of lemma 2}. Lemma \ref{lemma 2} indicates that under the standard Gaussian noise assumption, minimizing the log-likelihood (Eq. \ref{main eq:lik_gaussian}) is equivalent to minimizing the least-square loss (Eq. \ref{main eq:lemma_lsloss}). It implies that choosing the least-square loss as the objective function will introduce the standard Gaussian distribution in log-likelihood, causing a distribution mismatch when the underlying noise distribution is not the standard Gaussian. That is, the least-square loss will introduce bias to the mutual information according to the connection between log-likelihood and mutual information in Lemma \ref{lemma 1}. To be precise, we have the following lemma.

\begin{lemma} \label{lemma 3}
	For the additive noise model, as $m \rightarrow \infty$, minimizing the least-square loss is equivalent to minimizing the mutual information under the standard Gaussian noise assumption with the following form:
	\begin{equation} \label{main eq:gaussian mutual information}
	 \begin{aligned}
			   I_q(X, &  N_{Y} ;\theta ) 
			  =  \mathbb{E}_{x,y\sim p(X,Y)}\left[\log p(X=x,Y=y) 
			  \right. \\
			  &\left. -\log q(X=x) -\log q(N_Y=y - f(x) ;\theta ) \right] ,
	  \end{aligned}
	\end{equation}
	where $\displaystyle q$ is the density function of standard Gaussian distribution.
\end{lemma}

The proof is in Appendix \ref{app_sec: proof of lemma 3}.
Lemma \ref{lemma 3} shows that the least-square loss is equivalence to the biased mutual information given in Eq. \ref{main eq:gaussian mutual information} which explains exactly why the least-square loss will fail to identify the correct causal diction. The reason is that based on the independence property of ANM, i.e., $X \Vbar N_Y$ but $Y \nVbar \hat{N}_X$, using the standard mutual information is able to capture such independence property and to identify the causal direction by testing whether the noise and cause are independent in the causal direction but not independent in the anti-causal direction, i.e., $I(Y, \hat{N}_{X})>I(X, N_{Y})=0$. However, if the least-square loss is used and the underlying noise distribution is not standard Gaussian, the mutual information is biased as stated in Lemma \ref{lemma 3} such that the independence property $I_q(Y, \hat{N}_{X})>I_q(X, N_{Y})$ does not necessarily hold. That is, in the following theorem, we show that when the condition $I_q(Y, \hat{N}_{X})>I_q(X, N_{Y})$  occurs, the least-square loss will fail to identify the causal direction.

\begin{theorem} Let $X \to Y$ be the causal direction following the data generation mechanism $Y = f(X) + N_Y$ and we assume $m \rightarrow \infty$.  Using least-square loss the causal direction is non-identifiable if the following inequality holds: \label{theorem:mse nonlinear}
	\begin{equation} \label{main inequality:mse nonlinear}
		\begin{aligned}
			-\int p(X)\log q(X)dX -\int p(N_{Y})\log q(N_{Y})dN_{Y}  \\
			> -\int p(Y)\log q(Y)dY -\int p(\hat{N}_{X} )\log q(\hat{N}_{X} )d\hat{N}_{X},
		\end{aligned}
	\end{equation}
	where q is the density function of standard Gaussian distribution.
\end{theorem}

The proof is in Appendix \ref{app_sec: proof of theorem 2}. Theorem \ref{theorem:mse nonlinear} gives the certain condition of non-identifiability of the least-square loss. The key reason why the least-square can not correctly identify causal direction is the distribution mismatch between the underlying distribution $p$ and standard Gaussian distribution $q$. As shown in inequality \ref{main inequality:mse nonlinear}, when the divergence between the standard Gaussian distribution $q$ and the underlying distribution $p(X)$ or $p(N_Y)$ is large, the left-hand side will be larger than the right-hand side, making the causal direction misdirected using the least-square loss.

For example, let $Y=\text{sigmoid}(X) +N_{Y}$ where $X \sim \text{Uniform}\left(-C, C\right)$ and $ N_{Y} \sim \text{Uniform}\left( -1,1\right)$. In this case, we can simply increase $C$ such that $-\int p(X)\log q(X) dX$ will tend to infinity while $Y$ and $\hat{N}_X$ are bounded due to the sigmoid function, such that the right-hand side of Eq. \ref{main inequality:mse nonlinear} is also bounded, and hence the inequality must hold. In this case, the least-square loss will identify incorrectly the causal direction.

Therefore, it is necessary to use the correct distribution setting $q=p$ to obtain the correct result, which inspires a way that uses the entropy-based loss instead.

\section{Structure Learning Using Entropy-based Loss} \label{sec:entropy}

As discussed in Section \ref{sec:Limitations}, using least-square loss, which is equivalent to assuming the distribution of all noise terms is standard Gaussian distribution, will result in incorrect causal identification. The reason for these errors is the mismatch between the underlying noise distribution and the assumed standard Gaussian distribution.

Instead of using the square operator in least-square loss, we can replace it with the entropy as follows:
\begin{equation} \label{loss entropy}
	\min \sum_{i=1}^d H(N_i) + \lambda \left\|W\right\|_1 \quad \text{subject to} \quad tr\left(e^{W \circ W}\right)-d = 0,
\end{equation}
where $N_i$ denotes the independent noise and $H(N_i)$ denotes the entropy of noise $N_i$.

\begin{figure}[!h] 
	\centering
	\subfloat[Residuals in the regression of  casual direction, $N_X \Vbar N_Y$]{ 
		\includegraphics[width=0.22\textwidth]{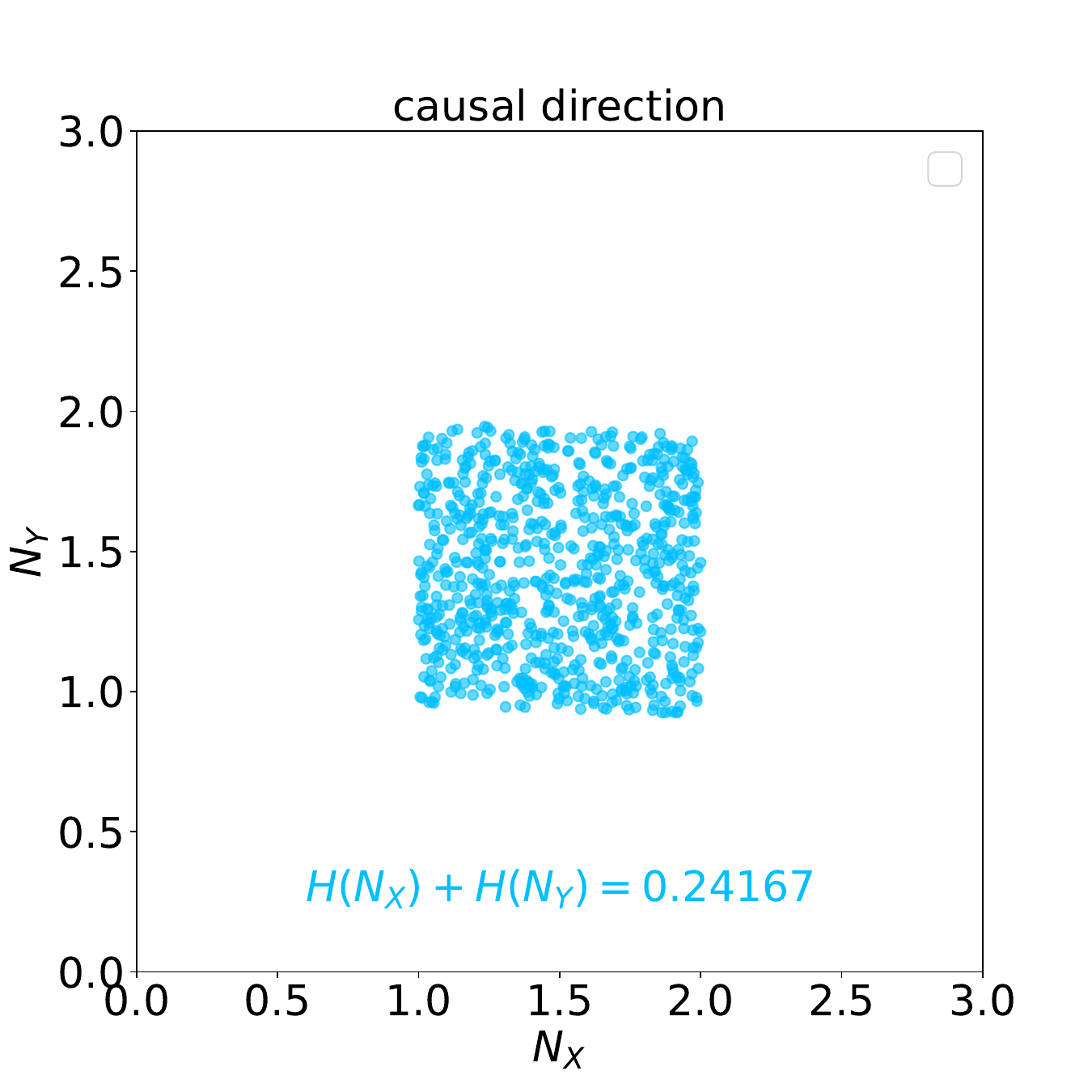} 
		\label{figure intuition a}
	}
	\hfil
	\subfloat[Residuals in the regression of anti-casual direction, $\hat{N}_X \nVbar \hat{N}_Y$]{
		\includegraphics[width=0.22\textwidth]{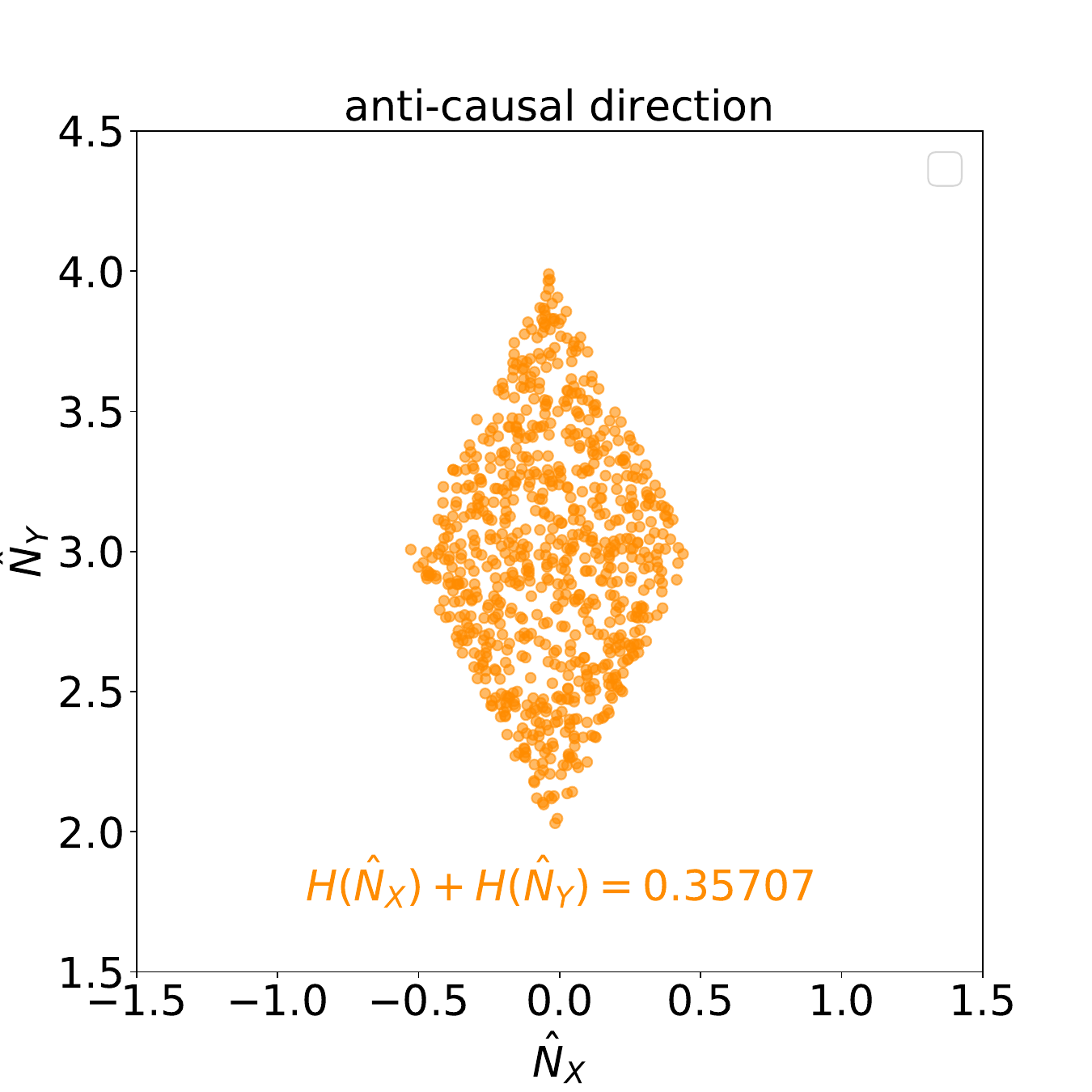}
		\label{figure intuition b}
	}
	\caption{Visualization of residuals in the regression of causal and anti-causal directions, respectively.}
	\label{figure intuition} 
\end{figure}

The intuition that we use entropy-based loss is shown in Figure \ref{figure intuition}. Intuitively, the sum of entropy can be rewritten as follows $H(N_X)+H(N_Y)=H(N_X,N_Y)+I(N_X,N_Y)$. As a result, since all residuals in the causal direction are independent of each other, we should have the lower sum of entropy under which $I(N_X,N_Y)=0$. While the residuals in the anti-causal direction depend on each other, we should have the larger sum of entropy as the mutual information $I(\hat{N}_X,\hat{N}_Y)>0$.

To explain why we use entropy-based loss theoretically, we first show that the entropy-based loss is consistent with the log-likelihood score under the additive noise model (see Theorem \ref{theorem entropy vs likelihood}). Based on Theorem \ref{theorem entropy vs likelihood}, by utilizing identifiability results in terms of likelihood (see Lemma \ref{ANM}), it is easy to show that the entropy-based loss is also identifiable (see Corollary \ref{corollary:ent asymmetry}).

\begin{theorem} \label{theorem entropy vs likelihood}
	In the additive noise model, the entropy-based score has a consistency with the log-likelihood score when the sample sizes $m \rightarrow \infty $, i.e.,
	\begin{equation}
		\lim_{m \to \infty}\frac{1}{m}\sum ^{m}_{j=1}\sum ^{d}_{i=1}\log p\left( x^{(j)}_{i} |x^{( j)}_{pa( i)}\right) =-\sum ^{d}_{i=1} H(N_{i} ).
	\end{equation}
\end{theorem}

The proof is in Appendix \ref{app_sec: proof of theorem 3}.
Theorem \ref{theorem entropy vs likelihood} indicates that, under the additive noise model, the entropy-based loss is consistent with the log-likelihood score, while Peters et al. \cite{JMLR:v15:peters14a} had proven that for the additive noise model, the log-likelihood is able to distinguish the causal direction under a mild assumption, which is illustrated in the following lemma.

\begin{lemma}[Theorem 1 in \cite{hoyer2008nonlinear}] \label{ANM}
	For the additive noise model $Y = f(X)+N_Y$, there is a forward model of the form $p(X,Y)=p(Y-f(X))p(X)$. If there is a backward model of the same form $p(X,Y)=p(X-g(Y))p(Y)$, then for all $X$, $Y$ with the three-time differentiable $f$ and $v^{\prime\prime}(Y-f(X))f^{\prime}(X) \neq 0$, the following equality holds:
	\begin{equation} \label{eq:anm condition}
		\xi^{\prime\prime\prime} = \xi^{\prime\prime} \left( - \frac{v^{\prime\prime\prime} f^{\prime}}{v^{\prime\prime}} + \frac{f^{\prime\prime}}{f^{\prime}} \right) - 2v^{\prime\prime}f^{\prime\prime}f^{\prime} + v^{\prime}f^{\prime\prime\prime} + \frac{v^{\prime}v^{\prime\prime\prime}f^{\prime\prime}f^{\prime}}{v^{\prime\prime}} - \frac{v^{\prime}(f^{\prime\prime})^2}{f^{\prime}},
	\end{equation}
	where $v \coloneq \log p(N_Y)$, $\xi\coloneq \log p(X)$.
\end{lemma}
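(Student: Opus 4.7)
The plan is to derive the stated identity by leveraging both the forward and backward additive-noise factorizations of $p(X,Y)$ and comparing partial derivatives of $\pi(X,Y) := \log p(X,Y)$. Under the hypothesis, $\pi$ admits two representations: the forward form $\pi = v(Y-f(X)) + \xi(X)$, and a backward form $\pi = \tilde v(X-g(Y)) + \tilde\xi(Y)$ for some functions $\tilde v, \tilde\xi, g$ arising from the assumed reverse model. The crux is to notice that in the backward form
\[
\partial_X\pi = \tilde v',\qquad \partial_{XX}\pi = \tilde v'',\qquad \partial_{XY}\pi = -\tilde v''\, g'(Y),
\]
so the ratio $\partial_{XY}\pi / \partial_{XX}\pi = -g'(Y)$ depends on $Y$ alone. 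Consequently $\partial_X\!\bigl(\partial_{XY}\pi / \partial_{XX}\pi\bigr) = 0$, and this is the identity I would transport back to the forward representation.

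Second, I would compute the same two derivatives using the forward expression. Writing $u := Y - f(X)$, a direct calculation gives
\[
\partial_{XY}\pi = -v''(u)\,f'(X),\qquad \partial_{XX}\pi = v''(u)\,f'(X)^2 - v'(u)\,f''(X) + \xi''(X).
\]
The hypothesis $v''(u)\,f'(X) \neq 0$ rules out the degeneracies that would make division illegitimate. Substituting these into $\partial_X\!\bigl(\partial_{XY}\pi / \partial_{XX}\pi\bigr) = 0$ yields a single scalar equation in $X$ (with $Y$ entering as a parameter through $u$) linking $v,f,\xi$ and their derivatives up to third order. This equation is purely algebraic in the derivatives once one uses the chain rule to convert each $\partial_X$ acting on $v^{(k)}(u)$ into $-f'(X)\, v^{(k+1)}(u)$.

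Third, I would expand the $\partial_X$ derivative, clear the denominator $(\partial_{XX}\pi)^2$, and collect terms according to their dependence on $v', v'', v'''$ and the derivatives of $f$ and $\xi$. Isolating $\xi'''(X)$ on the left-hand side and regrouping the remaining terms should produce precisely the right-hand side of equation~(\ref{eq:anm condition}); in particular, the coefficient of $\xi''$ should emerge as $-v'''f'/v'' + f''/f'$, matching the bracketed factor in the claim. The main obstacle is the algebraic bookkeeping in this final step: several terms of the form $v^{(a)}(u)\,f^{(b)}(X)\,\xi^{(c)}(X)$ must be collected with the correct signs, the cancellation of any would-be $\xi'$ term must be verified, and one must track carefully where the factor $1/v''$ in the first bracket and $1/f'$ in the last term originate. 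No new conceptual ingredient is required beyond the ``ratio depends on $Y$ alone'' observation from step one; the rest is a routine but lengthy symbolic manipulation.
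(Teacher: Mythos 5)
The paper states this lemma without proof, importing it verbatim as Theorem 1 of \cite{hoyer2008nonlinear}, so there is no in-paper argument to compare against; your reconstruction is essentially the original proof of Hoyer et al., and its overall architecture is sound. Writing $\pi:=\log p(X,Y)$, using the two factorizations, observing that the backward form forces a certain second-derivative ratio to depend on $Y$ alone, differentiating that constancy in $X$, and grinding through the chain rule does indeed land exactly on \ref{eq:anm condition}; I checked that the expansion of $\partial_{XXX}\pi=\partial_{XX}\pi\cdot\partial_{XXY}\pi/\partial_{XY}\pi$ with $\partial_{XXY}\pi=v'''(f')^{2}-v''f''$ reproduces every term, including the coefficient $-v'''f'/v''+f''/f'$ of $\xi''$.

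One technical slip: you form the ratio $\partial_{XY}\pi/\partial_{XX}\pi$ and propose to clear the denominator $(\partial_{XX}\pi)^{2}$. The hypothesis $v''(Y-f(X))f'(X)\neq 0$ guarantees only that $\partial_{XY}\pi=-v''f'$ is nonzero; it does \emph{not} prevent $\partial_{XX}\pi=v''(f')^{2}-v'f''+\xi''$ from vanishing, so your ratio may be undefined precisely where you need it. The fix is to use the reciprocal ratio $\partial_{XX}\pi/\partial_{XY}\pi$ (as in the original argument), which under the backward model equals $-1/g'(Y)$ wherever $g'\neq 0$ and more directly yields $\partial_{X}\bigl(\partial_{XX}\pi/\partial_{XY}\pi\bigr)=0$, i.e.\ $\partial_{XXX}\pi\,\partial_{XY}\pi=\partial_{XX}\pi\,\partial_{XXY}\pi$; dividing by the nonzero $\partial_{XY}\pi$ and isolating $\xi'''$ then gives the claimed identity with no illegitimate division. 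With that substitution your outline is a complete and correct proof of the cited result.
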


Lemma \ref{ANM} describes a necessary condition of the identifiability of ANM, i.e., if there exists a backward ANM with the independence property between cause and noise, the condition Eq. \ref{eq:anm condition} will hold. In other words, if the condition Eq. \ref{eq:anm condition} does not hold, there must not exist a backward ANM such that the cause and noise are independent and we can identify the causal direction using the log-likelihood score. Fortunately, such a condition only holds in restricted cases, e.g., the bivariate linear Gaussian noises model \cite{peters2014causal} and thus ANM can be identified in most cases.

Given backward ANM does not exist, i.e., Eq. \ref{eq:anm condition} does not hold, we generalize the result in Lemma \ref{ANM} to the following corollary by incorporating the result of Theorem \ref{theorem entropy vs likelihood}, showing that the entropy-based loss is able to identify the causal direction.

\begin{corollary} \label{corollary:ent asymmetry}
	For each pair of additive noise model $Y=f(X)+N_Y$, if $v^{\prime\prime}(Y-f(X))f^{\prime}(X) \neq 0$ and the condition in Eq. \ref{eq:anm condition} does not hold, then using the entropy-based loss each pair of additive noise model is identifiable and the following inequality holds: 
	\begin{equation} \label{main: inequality: entropy}
		H\left(X\right) + H\left(N_{Y}\right) < H\left(Y\right) + H\left(\hat{N}_{X}\right).
	\end{equation}
\end{corollary}

The proof is in Appendix \ref{app_sec: proof of corollary 1}.
Corollary \ref{corollary:ent asymmetry} shows the identifiability of entropy-based loss under ANM assumption. It indicates that the entropy of residuals is lower in the causal direction than in the anti-causal direction, making the causal direction can be identified by entropy. The example in Figure \ref{figure intuition} also shows such an asymmetry of the entropy of residuals between the causal direction and the anti-causal direction. The regression residuals in the causal direction (blue points in Figure \ref{figure intuition a}) are independent, while the regression residuals in the anti-causal direction (yellow points in Figure \ref{figure intuition b}) are not independent. This independence property is characterized by the entropy of residuals, which is consistent with the inequality \ref{main: inequality: entropy} in Corollary \ref{corollary:ent asymmetry}.

Moreover, for the task of causal discovery, corollary \ref{corollary:ent asymmetry} illustrates the validity of the entropy-based loss under the additive noise model, i.e., Eq. \ref{loss entropy}, which is based on the consistency with the likelihood score, indicating that by comparing the entropy of residuals between causal and anti-causal directions, we can successfully identify the causal direction.

\begin{figure*}[!t] 
	\centering
	\subfloat[Accuracy varies with various causal strength]{
		\includegraphics[height=0.17\textwidth]{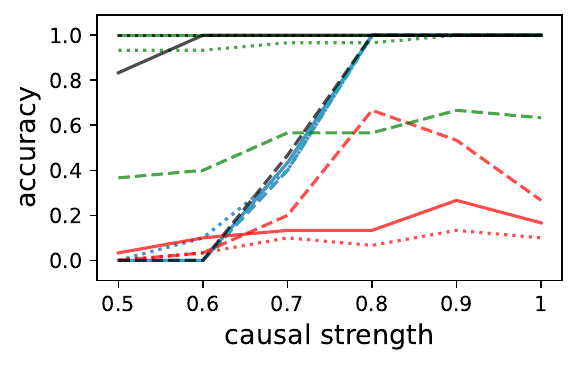} 
		\label{figure pair causal strength}
	}
	\hfil
	\subfloat[Accuracy varies with various variance of $N_{X}$]{
		\includegraphics[height=0.17\textwidth]{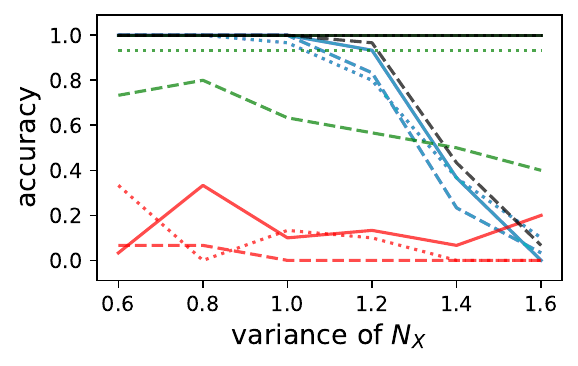}
		\label{figure pair nx}
	}
	\hfil
	\subfloat[Accuracy varies with various variance of $N_{Y}$]{
		\includegraphics[height=0.17\textwidth]{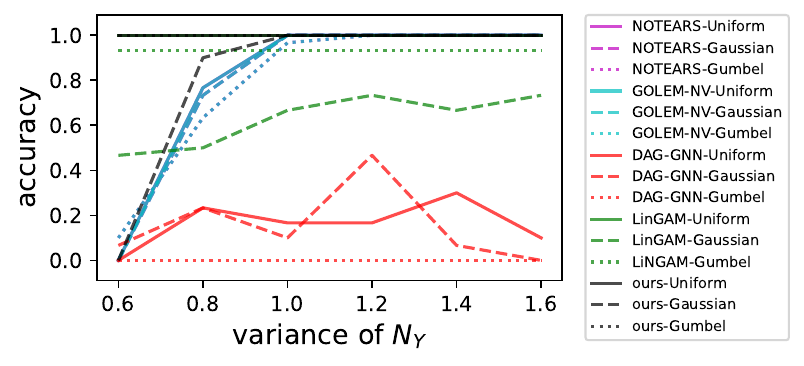}
		\label{figure pair ny}
	}
	
	\caption{Causal direction accuracy in pairwise data: the default parameters are as follows: $\alpha=0.5$, $\sigma_{N_{X}}=2$, and $\sigma_{N_{Y}} = 1$. And the controlled parameters will vary in the range as follows: $\alpha \in [0.5,1]$, $\sigma_{N_{X}} \in [0.6,1.6]$, $\sigma_{N_{Y}}\in [0.6,1.6]$. The blue lines are caused by the total overlap between NOTEARS and GOLEM-NV.
	} \label{figure pair} 
\end{figure*}

\section{Experiments} \label{sec:experiment}

To investigate the effectiveness of the entropy-based loss, we compare it with baseline algorithms on both synthetic data and real-world data. The synthetic data are generated from linear pairwise SCM, linear non-Gaussian SCM, and nonlinear additive noise SCM, respectively. The following algorithms are taken as the baseline: NOTEARS \cite{zheng2018dags}, LiNGAM \cite{shimizu2006linear}, GOLEM-NV \cite{ng2020role}, DAG-GNN \cite{yu2019dag}, NOTEARS-MLP \cite{zheng2020learning}, GraN-DAG \cite{lachapelle2019gradient}. We reuse the parameter settings for those baseline algorithms in their original papers and codes.

Our implementation, denoted by \textbf{ours} (linear) and \textbf{ours-MLP} (nonlinear), is based on the codes of NOTEARS \cite{zheng2018dags} and NOTEARS-MLP \cite{zheng2020learning}, respectively. Following the previous works, the numerical optimize algorithms L-BFGS \cite{byrd1995limited} is used for optimization, and we will prune the edges after training with a small threshold $\omega = 0.3$ to rule out cycle-inducing edges.

In the linear pairwise SCM, the data are generated according to the following SCM: $X = N_X, Y = \alpha X + N_Y$ with causal strength $\alpha  \in [ 0.5,1.0 ]$ and the noise $N_X, N_Y \sim \sigma_{N}*\text{Gaussian}(0, 1)$ or $N_X, N_Y \sim \sigma_{N}*\text{Uniform}(-\sqrt{3},\sqrt{3})$ or $N_X, N_Y \sim \sigma_{N}*\text{Gumbel}(0,\frac{\sqrt{6}}{\pi})$ with standard deviation $\sigma_{N} \in [0.6, 1.6]$. 

In linear non-Gaussian case, the data are generated according to the following linear SCM: $ X_i = \Sigma_{j \in pa(i)} \beta_{ij} X_j + N_j$ with random causal strength $\beta_{ij} \sim \text{Uniform}(-0.4, -0.8) \cup \text{Uniform}(0.4, 0.8) $ and the noise $N_j \sim \sigma_{N_j}*\text{Uniform}(-\sqrt{3},\sqrt{3})$ or $N_j \sim \sigma_{N_j}*\text{Gumbel}(0,\frac{\sqrt{6}}{\pi})$ with random standard deviation $\sigma_{N_j} \sim \text{Uniform}(0.5, 1.0)$. 

In the nonlinear case, the data are generated according to the following nonlinear SCM: $X_i = \text{tanh}\left(\Sigma_{j \in pa(i)} \beta_{1,ij} X_j \right) + \text{cos}\left(\Sigma_{j \in pa(i)}\beta_{2,ij} X_j \right) + \text{sin}\left(\Sigma_{j \in pa(i) }\beta_{3,ij} X_j \right) + N_j$ with random causal strength $\beta_{k,ij} \sim \text{Uniform}(0.5, 2.0)$ for $k=1,2,3$ and the noise $N_j \sim \sigma_{N_j}*\text{Uniform}(-\sqrt{3},\sqrt{3})$ with random standard deviation $\sigma_{N_j} \sim \text{Uniform}(0.5, 1.0)$.

Following previous works, Structural Hamming Distance (SHD), False Discovery Rate (FDR), and True Positive Rate (TPR) are recorded as the evaluation metrics for all algorithms.

\subsection{Pairwise Synthetic Data}

In this section, we extend the experiments in Figure \ref{figure 1}, and design a series of controlled experiments with respect to the causal strength, the variance of cause, and the variance of effect. The number of variables is fixed at 2. At each experiment, we will control one of the parameters while fixing others. The range of the above controlled parameters are as follows: $\alpha = \{0.5, 0.6, 0.7, 0.8, 0.9, 1.0\}$, $\sigma_{X}=\{0.6, 0.8, 1.0, 1.2, 1.4, 1.6\}$ and $\sigma_{Y}=\{0.6, 0.8, 1.0, 1.2, 1.4, 1.6\}$ where the default parameters are $\beta=0.5, \sigma_{X}=2, \sigma_{Y}=1$. All experiments will run 30 times.

As shown in Figure \ref{figure pair}, the least-square loss based methods are sensitive to causal strength and variance of noise, which verifies Theorem \ref{theorem mseLinear} indicating that they fail when $\alpha ^{2} < 1 - \frac{\sigma_{N_Y}^{2}}{\sigma_{N_X}^{2}}$. NOTEARS and GOLEM-NV, based on least-square loss, only correctly orient causal direction when $\alpha ^{2} > 1 - \frac{\sigma_{N_Y}^{2}}{\sigma_{N_X}^{2}}$ regardless of noise's distribution. DAG-GNN returns an empty adjacency matrix in many cases, and the reason might be that the auto-encoder framework is overly complex and underfitting such a simple SCM. Based on the non-Gaussian property, our method and LiNGAM perform well in the Uniform and Gumbel cases and perform poorly in the Gaussian cases, which also verifies the Corollary \ref{corollary:ent asymmetry}. The experiments show that the entropy-based loss can correctly orient the causal direction with the help of non-Gaussian property, while the least-square-based loss orients casual direction by the inequality $\alpha ^{2} > 1 - \frac{\sigma_{N_Y}^{2}}{\sigma_{N_X}^{2}}$, which does not always hold in reality.

\subsection{Synthetic Structures}
In this section, we design a series of controlled experiments with respect to the sample sizes and the number of variables on the synthetic random causal structures. At each experiment, we will control one of the parameters while fixing others. The range of the above controlled parameters are as follows: the number of samples=$\{200, 400, \textbf{600}, 800, 1000\}$ and the number of variables$=\{5, 10, \textbf{15}, 20, 25, 50, 100\}$ with in-degree=2 where the default setting is marked as bold. All experiments will run at least 10 times.

\subsubsection{Linear Case}

\begin{figure*}[!ht] 
	\centering
	\includegraphics[scale=0.3]{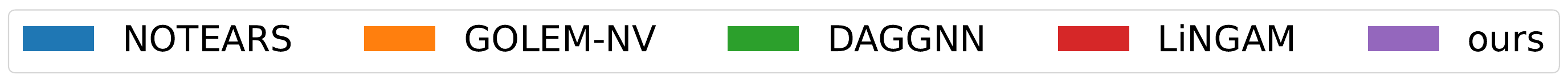}

    \subfloat[Uniform $\quad$ noises $\quad$ with $\quad$ variables=15]{		\label{figure 2a}
		\begin{minipage}[b]{0.31\textwidth}
			\includegraphics[width=1.\textwidth]{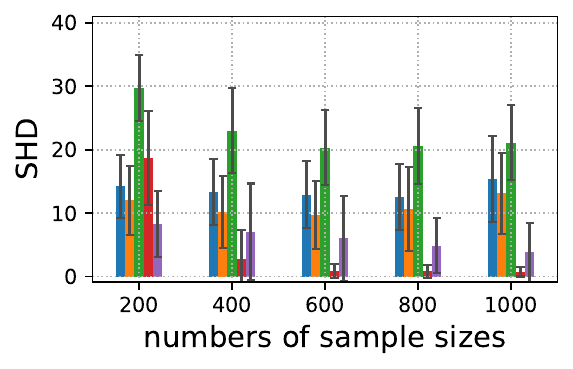} \\
			\includegraphics[width=1.\textwidth]{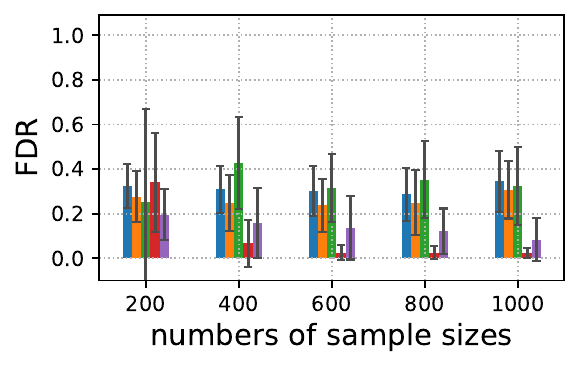} \\
			\includegraphics[width=1.\textwidth]{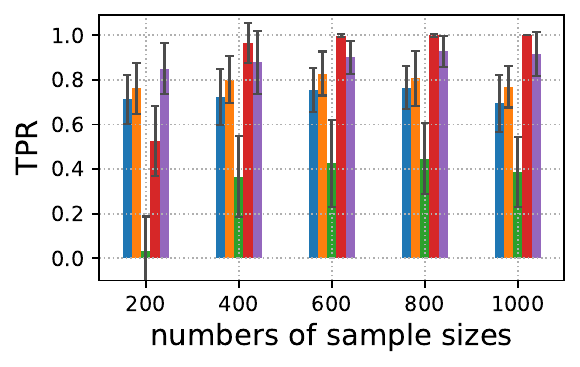}
			\hspace{0.2cm}
		\end{minipage}
	}
	\subfloat[Uniform $\quad$ noises $\quad$ with $\quad$ samples=600]{		\label{figure 2b}
		\begin{minipage}[b]{0.31\textwidth}
			\includegraphics[width=1.\textwidth]{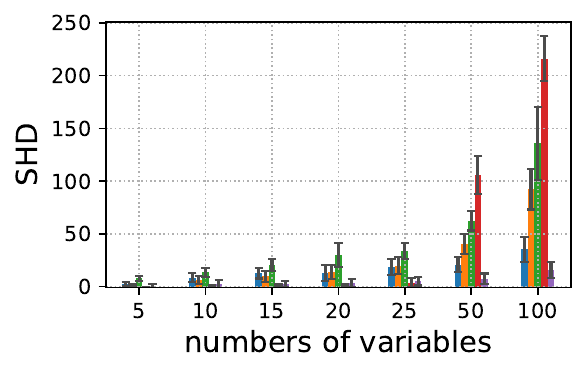} \\
			\includegraphics[width=1.\textwidth]{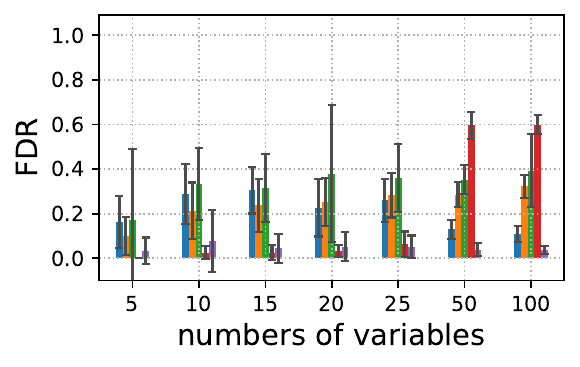} \\
			\includegraphics[width=1.\textwidth]{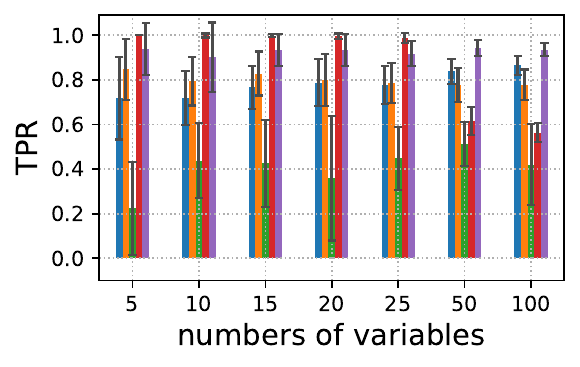} 
			\hspace{0.2cm}
		\end{minipage}
	}
	\subfloat[Gumbel $\quad$ noises $\quad$ with $\quad$ samples=600]{  		\label{figure 2c}
		\begin{minipage}[b]{0.31\textwidth}
			\includegraphics[width=1.\textwidth]{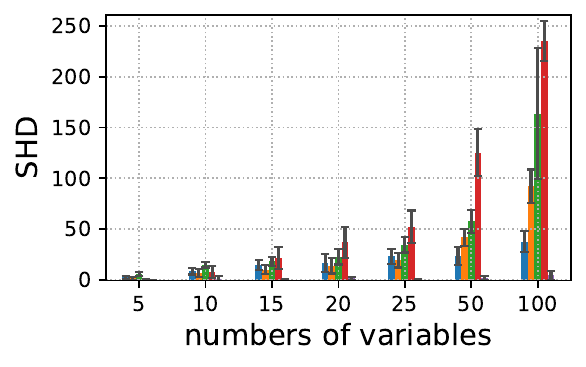} \\
			\includegraphics[width=1.\textwidth]{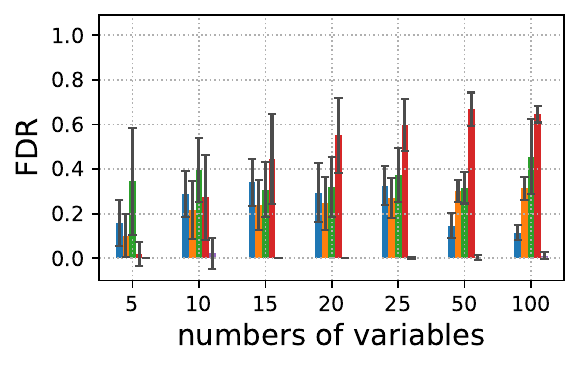} \\
			\includegraphics[width=1.\textwidth] {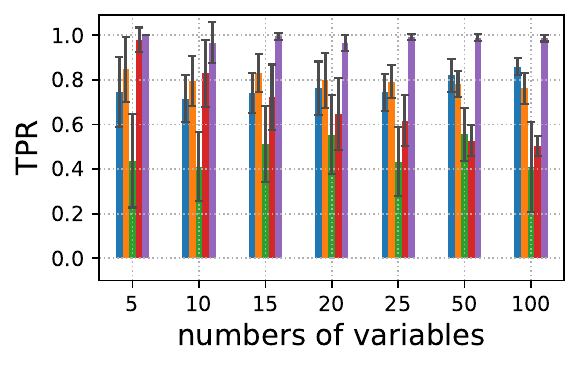} 
			\hspace{0.2cm}
		\end{minipage}
	}
	\caption{Structure recovery in linear data in terms of SHD, FDR, and TPR to the true graph: in Figure \ref{figure 2a}, the noises are Uniform, the number of variables and edges are 15 and 30; in Figure \ref{figure 2b}, the noises are Uniform and the sample size is 600; in Figure \ref{figure 2c}, the noises are Gumbel and the sample size is 600.
	} \label{figure 2} 
\end{figure*}

\begin{figure*}[!ht]  
	\centering
	\includegraphics[scale=0.3]{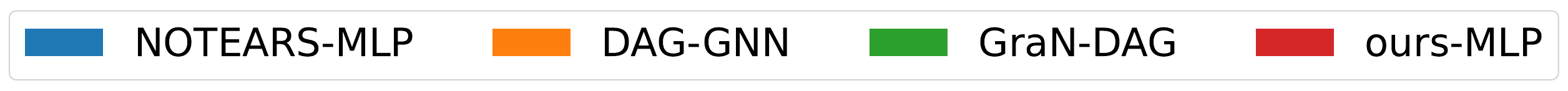}

    \subfloat[Uniform $\quad$ noises $\quad$ with $\quad$ variables=15]{		\label{figure 3a}
		\begin{minipage}[b]{0.31\textwidth}
			\includegraphics[width=1.\textwidth]{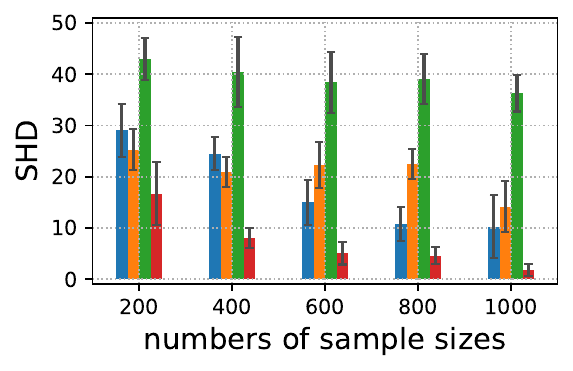} \\
			\includegraphics[width=1.\textwidth]{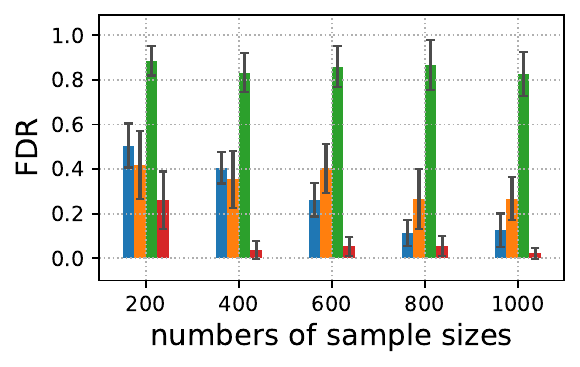} \\
			\includegraphics[width=1.\textwidth]{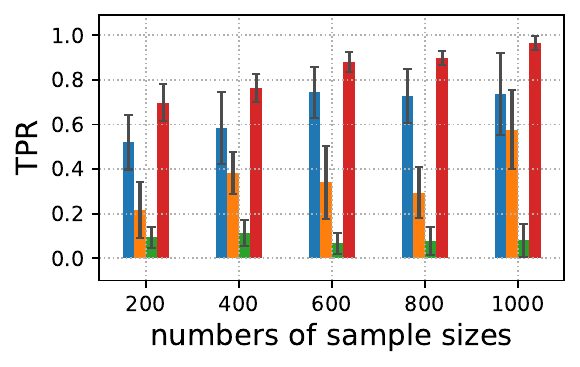} 
			\hspace{0.2cm}
		\end{minipage}
	}
	\subfloat[Uniform $\quad$ noises $\quad$ with $\quad$ samples=600]{ \label{figure 3b}
		\begin{minipage}[b]{0.31\textwidth}
			\includegraphics[width=1.\textwidth]{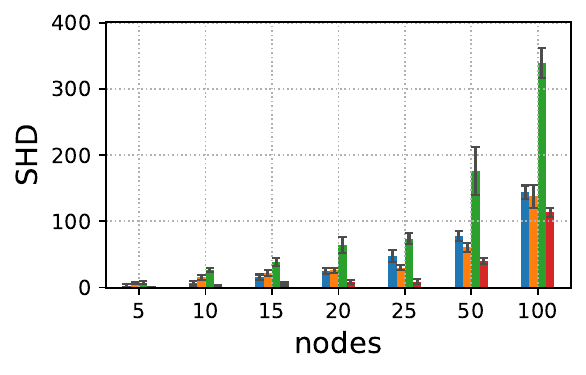} \\
			\includegraphics[width=1.\textwidth]{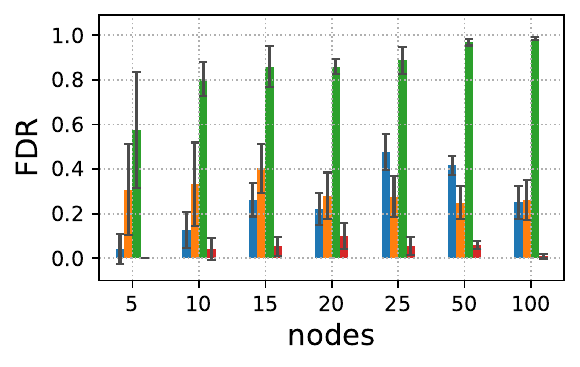} \\
			\includegraphics[width=1.\textwidth]{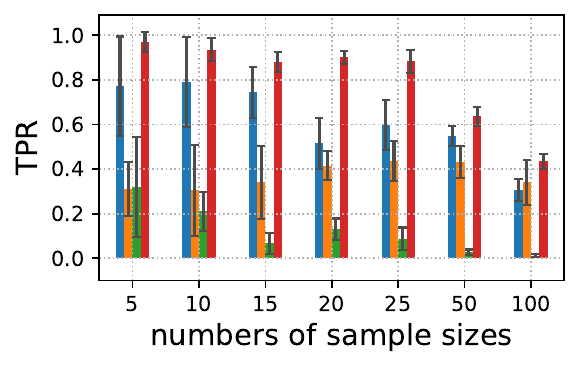} 
			\hspace{0.2cm}
		\end{minipage}
	}
	\subfloat[Uniform $\quad$ noises $\quad$ with $\quad$ variables=15]{ \label{figure 3c}
		\begin{minipage}[b]{0.31\textwidth}
			\includegraphics[width=1.\textwidth]{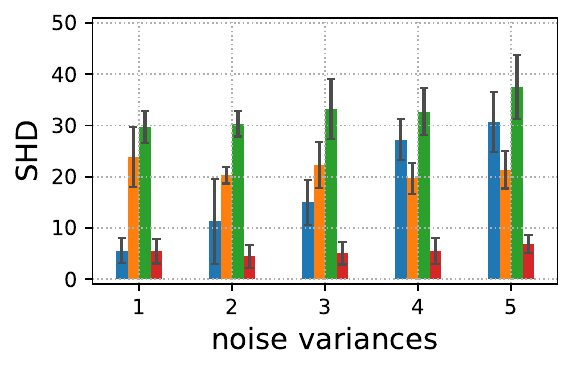} \\
			\includegraphics[width=1.\textwidth]{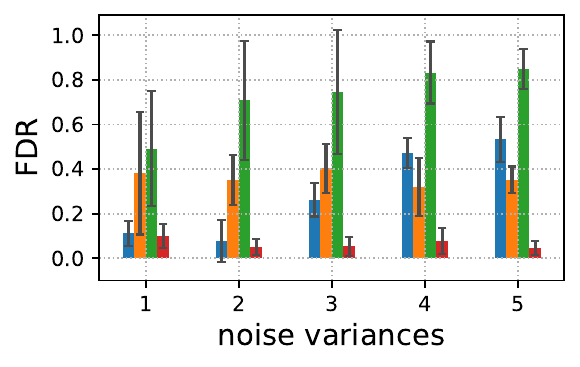} \\
			\includegraphics[width=1.\textwidth]{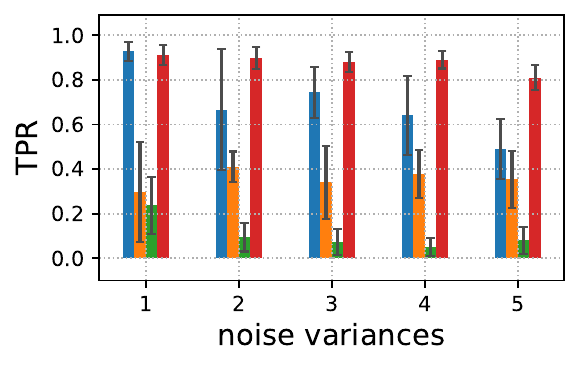}
			\hspace{0.2cm}
		\end{minipage}
	}
	
	\caption{Structure recovery in nonlinear data in terms of SHD, FDR, and TPR to the true graph: in figure \ref{figure 3a}, the noises are uniform with variance=3.0 and the number of variables and edges are 15 and 30; in Figure \ref{figure 3b}, the noises are Uniform with variance=3.0 and the number of sample sizes is 600; in Figure \ref{figure 3c}, the noises are Uniform, the number of sample sizes is 600 and the number of variables and edges are 15 and 30.
	} \label{figure 3} 
\end{figure*}
In this part, we perform linear experiments on the synthetic linear non-Gaussian data. Here, we test our method on Uniform and the Gumbel distribution, respectively. Each algorithm runs 30 times for all experiments. 

As shown in Figure \ref{figure 2}, our method outperforms all baseline methods on the non-Gaussian distributions experiments including Uniform and Gumbel distributions. The lower SHD means the causal structure recovered by our method is closer to the ground truth. The lower FDR and higher TPR mean we always recover more real edges without less misorientation. Moreover, by using the entropy-based method, we have the least deviation error, indicating the stability of our method. It is also interesting to see the significant difference between ours and NOTEARS, which verifies that the algorithm could learn the anti-causal direction using least-square loss. 
In detail, in Figure \ref{figure 2a}, as the sample size increases, all methods have better performances, and we can see that when the sample size is small, we still outperform the other gradient-based methods, which shows the robustness of our method. In Figure \ref{figure 2b} and \ref{figure 2c}, in general, the performance decreases as the number of variables increases, but compared with the baseline methods, our method decreases more slowly. Comparing Figure \ref{figure 2a} with Figure \ref{figure 2b} and \ref{figure 2c}, we find that LiNGAM performs better in low-dimension cases, but its performance decreases rapidly with the increasing of the number of variables, which indicates that gradient-based methods are more robust to deal with high dimension cases. In addition, we can see that under different distributions, our method has similar performance within experimental errors, which verifies the results in Theorem \ref{theorem mseLinear}.

\subsubsection{Nonlinear Case}

In this part, we test our entropy-based loss in nonlinear data. To further verify Theorem \ref{theorem entropy vs likelihood} and Corollary \ref{corollary:ent asymmetry}, we generate the nonlinear data using Uniform distribution with different variances. That is, the higher the variance, the more non-Gaussianity. In the controlled experiments we further test our method on different noise variance=$\{1, 2, \textbf{3}, 4, 5\}$. Each algorithm will run at least 10 times.

The results are given in Figure \ref{figure 3}. Overall, our method generally outperforms the baseline methods showing the robustness of our method in variable-varying and variance-varying cases. In detail, in Figure \ref{figure 3a}, we notice that compared with the linear data, all methods require more samples to acquire a decent performance, but our method still outperforms the baseline methods. For our method, 600 is a decent choice for the sample sizes. In figure \ref{figure 3b}, with the growing number of variables, the performance of NOTEARS-MLP decreases rapidly while our method remains stable. The reason is that as the number of variables grows, the edges increase simultaneously, and the incorrectly identified edges will also increase. In the high dimensional case, our method stays the lowest SHD, the lowest FDR, and the highest TPR, indicating that our method is less sensitive to the dimensional increase. In Figure \ref{figure 3c}, we can see that our method is not sensitive to the noise variances while the performance of other methods decreases rapidly as the variances grow. The reason is that the variance controls the noise non-Gaussianity making it far from the standard Gaussian. Therefore, the entropy-based loss is stable and identifiable under any distribution while other methods will lose its identifiability.

\subsection{Real-World Data}

\begin{table}[!t]
\renewcommand{\arraystretch}{1.3}
\caption{Empirical results on Sachs data set}
\label{tab:real_result}
\centering
\begin{tabular}{|c||c||c|}
\hline
Method & Predicting Edges & SHD \\
\hline
NOTEARS & 14 & 12  \\
\hline
LiNGAM & 8 & 15 \\
\hline
GOLEM-NV & 11 & 14 \\
\hline
DAG-GNN & 15 & 16 \\
\hline
NOTEARS-MLP & 16 & 12\\
\hline
GraN-DAG & 10 & 13 \\
\hline
\textbf{ours} & 16 & 13 \\
\hline
\textbf{ours-MLP} & 13 & \textbf{11}\\
\hline
\end{tabular}
\end{table}

Following previous works, we consider the real-world data set for the discovery of a protein signaling network based on expression levels of proteins and phospholipids \cite{sachs2005causal}. This dataset is commonly used for causal structures learning work, with experimental annotations accepted by the biological research community. There are 11 variables and 853 observational samples in the data set, and the ground-truth structure is provided by Sachs et al. \cite{sachs2005causal} containing 17 edges. On this benchmark data set, compared with other methods, $\textbf{ours-MLP}$ achieves the best SHD 11 with 13 estimated edges. It shows that in the real-world situation our method still works better.

\subsection{Algorithm Parameter Settings}

\begin{table*}[!t] 
  \centering
  \caption{Parameter Settings} \label{tab:param_setting}
    \begin{tabular}{llll}
    \toprule
    Parameter & \multicolumn{1}{l}{Symbol} & Value & Applicable to \\
    \midrule
    threshold on $W$ & $\omega$ & 0.3 & ours, ours-MLP, NOTEARS, NOTEARS-MLP, GOLEM-NV \\
    threshold on $W$ & $\omega$ & 0.4 & GraN-DAG \\
    acyclicity penalty & $h(W)$  & $tr\left(e^{W \circ W}\right)-d$ & ours, ours-MLP, NOTEARS, NOTEARS-MLP, GOLEM-NV, GraN-DAG    \\
    acyclicity penalty & $h(W)$ & $tr\left((I+\frac{W \circ W}{d})^d\right)-d$      & DAG-GNN \\
    $h$ tolerance & $\epsilon$      & $10^{-8}$ & ours, ours-MLP, NOTEARS, NOTEARS-MLP, GOLEM-NV, DAG-GNN, GraN-DAG \\
    $h$ progress rate &   $c$    & $0.25$ & ours, ours-MLP, NOTEARS, NOTEARS-MLP, DAG-GNN, GraN-DAG \\
    initial Lagrange multiplier &  $\alpha_0$    & $0$ & ours, ours-MLP, NOTEARS, NOTEARS-MLP, DAG-GNN, GraN-DAG \\
    $\rho$ increase factor &       & 10 & ours, ours-MLP, NOTEARS, NOTEARS-MLP, DAG-GNN, GraN-DAG \\
    $\rho$ maximum &       & $10^{16}$ & ours, ours-MLP, NOTEARS, NOTEARS-MLP \\
    $\rho$ maximum &       & $10^{20}$ & DAG-GNN \\
    optimize algorithms &        & L-BFGS &  ours, ours-MLP, NOTEARS, NOTEARS-MLP    \\
    optimize algorithms &       &  Adam & GOLEM-NV, DAG-GNN \\
    optimize algorithms &       &  SGD & GraN-DAG \\
    seed &       &  $123$ & all \\
    \bottomrule
    \end{tabular}%
  \label{sup tab:addlabel}%
\end{table*}%

The parameter settings of baseline algorithms follow their original papers and codes. Importantly, we use the least-squares loss $ \frac{1}{2m} \| \mathbf{X} - \mathbf{X} W \|^2_F + \lambda \left\|W\right\|_1 $ for NOTEARS \cite{zheng2018dags} and NOTEARS-MLP \cite{zheng2020learning} regardless of the noise type. We use the negative ELBO as the objective function under standard Gaussian noise assumption for DAG-GNN \cite{yu2019dag}. For GOLEM-NV \cite{ng2020role} and GraN-DAG \cite{lachapelle2019gradient}, we use the likelihood-based loss with (multivariate) Gaussian assumption. For \textbf{our} and \textbf{our-MLP}, we use entropy-based loss as the objective function, and the parameter settings for our method are shown in Table \ref{tab:param_setting}. Additionally, the code of NOTEARS and NOTEARS-MLP is available at \url{https://github.com/xunzheng/notears } and the code of DAG-GNN is available at \url{https://github.com/fishmoon1234/DAG-GNN }. The code of GraN-DAG is available at \url{https://github.com/kurowasan/GraN-DAG } and the code of GOLEM-NV is available at \url{https://github.com/ignavierng/golem}. For LiNGAM, we use the causal discovery toolbox packages \cite{cdt}. Our code is available at \url{https://github.com/DMIRLAB-Group/CausalDiscoveryBasedOnEntropy}.

\section{Conclusion}
  In this work, we have re-examined NOTEARS with the least-square loss for learning DAGs in causal discovery. Our analysis shows that NOTEARS with the least-square loss is disabled to discover underlying causal structure under some weak assumptions, and the entropy-based loss is a proper replacement of the least-square loss in NOTEARS framework. We further provide the theoretical justification for the proposed method, by showing the consistency with the likelihood. Our experimental results validate our theoretical analysis and the effectiveness of the proposed entropy-based method. A clear next step is to generalize the theory and algorithms to the more general causal mechanism, e.g., the post-nonlinear causal model.




{\appendices

\setcounter{proposition}{0}
\setcounter{lemma}{0}
\setcounter{theorem}{0}
\setcounter{corollary}{0}
\setcounter{equation}{0}
\numberwithin{equation}{section}

\setcounter{figure}{0}

\setcounter{section}{0}

\section{Proof of Proposition \ref{pro:dist tran}} \label{app_sec:proof of pro1}

We present the property of additive noise model in the following proposition showing that $ p(X, Y) = p(X, N_Y)$ in the causal direction and $p(X, Y)=p(Y,\hat{N}_X)$ in the anti-causal direction.

\begin{proposition}[Noise representation property] \label{pro:dist tran}
For a bivariate model of variables $X_1$ and $X_2$, if the noise $N$ is additive, i.e., $X_2=f(X_1)+N$, then there exists the following distribution transformations:
\begin{equation} \label{eq: pro1_1}
    p(X_1,X_2) = p(X_1,N). 
\end{equation}
If the noise $N$ is independent of $X_1$, we further have:
\begin{equation}  \label{eq: pro1_2}
    p(X_2|X_1) = p(N=X_2-f(X_1)).
\end{equation} 
\end{proposition}

\begin{proof}
For the additive noise model $X_2=f(X_1)+N$, to avoid ambiguity, we consider the transformation from $p(X_1,X_2)$ to $p(X_1',N)$, where we additionally denote $X_1'=X_1$ for understandability. Specifically, the transformation can be expressed as follows:
	\begin{equation} \label{transfromation}
	    \begin{aligned}
    		\text{}\left\{
    		\begin{aligned}
    			X_1' & = X_1,
    			\\ N & = X_2-f(X_1).
    		\end{aligned}
    		\right. 
	    \end{aligned}
	\end{equation}
	Then, the distribution transformations are given as follows: 
	\begin{equation} \label{abs determinant J}
	  \begin{aligned}
		p(X_1,X_2)=p(X_{1}',N)\left|\det(J)\right| = p(X_{1}',N)
	  \end{aligned}
	\end{equation}
	where $J$ denotes the Jacobian matrix of the transformation from $(X_1, X_2)^{T}$ to $(X_{1}', N)^{T}$, and we have:
	\begin{equation} \label{app: eq jocobian value 1}
		\begin{aligned}
			\ \left| \det (J) \right| & =
			\left| \det \begin{pmatrix}
				\frac{\partial X_{1}'}{\partial X_{1}} & \frac{\partial X_{1}'}{\partial X_2}\\
				\frac{\partial N}{\partial X_{1}} & \frac{\partial N}{\partial X_2}
			\end{pmatrix} \right| 
		\\ & =
			\left| \det \begin{pmatrix}
				\frac{\partial X_{1}}{\partial X_{1}} & \frac{\partial X_{1}}{\partial X_2}\\
				\frac{\partial (X_2-f(X_{1}))}{\partial X_{1}} & \frac{\partial (X_2-f(X_{1}))}{\partial X_2}
			\end{pmatrix} \right| 
		\\ & =
			\left| \det \begin{pmatrix}
				1 & 0\\
				-\frac{\partial f(X_{1})}{\partial X_{1}} & 1
			\end{pmatrix} \right| \\ & = 1 ,
		\end{aligned}
	\end{equation}
where the second equality is obtained by substituting Eq. \ref{transfromation} into the first equality.
    
	Hence, substituting Eq. \ref{app: eq jocobian value 1} into Eq. \ref{abs determinant J}, we conclude:
	\begin{equation}
	  \begin{aligned}
	    p(X_1,X_2) = p(X_1', N) = p(X_1, N).
	  \end{aligned}
	\end{equation}
 
     If the noise $N$ is independent of $X_1$, we further have:
		\begin{equation}
	  \begin{aligned}
	    p(X_1)p(X_2|X_1)& = p(X_1)p(N), \\
	  \end{aligned}
	\end{equation}
	and by eliminating $p(X_1)$ in both sides, we have:
		\begin{equation}
	  \begin{aligned}
	    p(X_2|X_1)& =p(N),
	  \end{aligned}
	\end{equation}
	which finishes the proof.
\end{proof}

\section{Proof of Lemma \ref{app: lemma 1} }

In the following lemma, we bridge a connection between the log-likelihood and mutual information for the additive noise model.  The proof of Lemma \ref{app: lemma 1}      is based on the definition of mutual information.
\begin{lemma} \label{app: lemma 1} 
	Given the samples $\left\{x^{( i)} ,y^{( i)}\right\}^{m}_{i=1}$ and causal model $Y=f(X;\theta)+N_{Y} $ with any parameters $\displaystyle \theta $ and as $m \rightarrow \infty$, the average log-likelihood $l_{X\rightarrow Y} (\theta )$ and the mutual information between $X$ and $N_Y$ are related in the following way:
	\begin{equation} \label{eq:mutual information vs likelihood}
		I(X,N_{Y} ;\theta )
		= \mathbb{E}_{x,y\sim p(X,Y)} \left[ p(X=x,Y=y) \right] - \lim_{m \to \infty} l_{X\rightarrow Y} (\theta ),
	\end{equation}
	where
	\begin{equation} \label{eq:general_likelihood}
	  \begin{aligned}
		&l_{X\rightarrow Y} (\theta ) \\
		=& 
		\frac{1}{m} \sum ^{m}_{i=1}\log p(X=x^{( i)} )
		+ \frac{1}{m}\sum ^{m}_{i=1}\log p(N_{Y}=y^{(i)}-f(x^{(i)}) ;\theta ) .
	  \end{aligned}
	\end{equation}
\end{lemma}

\label{app_sec: proof of lemma 1}
\begin{proof}
    The average log-likelihood $l_{X\rightarrow Y} (\theta )$ has the following form:
    \begin{equation} \label{app: eq avg log lih}
        \begin{aligned}
		  & l_{X\rightarrow Y} (\theta ) \\
		= & \frac{1}{m} \sum ^{m}_{i=1}\log p(X=x^{( i)}, Y=y^{( i)} ;\theta) \\
        = &	\frac{1}{m} \sum ^{m}_{i=1}\log p(X=x^{( i)}, N_{Y}=y^{(i)}-f(x^{(i)}) ;\theta ) \\
		= &	\frac{1}{m}	\sum ^{m}_{i=1}\log p(X=x^{( i)} )
		+ \frac{1}{m} \sum ^{m}_{i=1}\log p(N_{Y}=y^{(i)}-f(x^{(i)}) ;\theta ),
	  \end{aligned}
    \end{equation}
    where the second equality is based on Proposition \ref{pro:dist tran} and the third equality is based on the independence between $X$ and $N_Y$.
    
    The mutual information between $X$ and $N_Y$ can be represented by:
    \begin{equation*}
      \begin{aligned}
          & I(X,N_{Y} ;\theta ) \\
        = & \mathbb{E}_{x,y\sim p(X,Y)}\left[ \log \left( \frac{p(X=x,N_Y=y-f(x);\theta)}{ p(X=x)p(N_Y=y-f(x); \theta)} \right) \right]  \\
        = & \mathbb{E}_{x,y\sim p(X,Y)}\left[\log p(X=x,N_Y=y-f(x);\theta) 
        \right. \\ \phantom{=\;\;}
		&\left. -  \log p(X=x) - \log p(N_Y=y-f(x); \theta) \right] 
         \\
        = & \mathbb{E}_{x,y\sim p(X,Y)}\left[\log p(X=x,Y=y))
         \right. \\ \phantom{=\;\;}
		&\left. - \log p(X=x) -\log p(N_Y=y-f(x); \theta) \right] \\ 
        = & \mathbb{E}_{x,y\sim p(X,Y)}\left[\log p(X=x,Y=y)\right] - \lim_{m \to \infty} l_{X\rightarrow Y}(\theta)
      \end{aligned}
    \end{equation*}
    where the third equality is based on Proposition \ref{pro:dist tran} and the fourth equality is obtained by substituting Eq. \ref{app: eq avg log lih}).
\end{proof}

\section{Proof of Lemma \ref{lemma 2}}
\label{app_sec: proof of lemma 2}

In the following lemma, we bridge a connection between the log-likelihood and least-square loss for the additive noise model.
\begin{lemma} \label{lemma 2}
	For the additive noise model $Y=f(X)+N_Y$, as $m \rightarrow \infty$, maximizing the average log-likelihood $l_{X\to Y}(\theta)$ with the standard Gaussian noise assumption,
	\begin{equation} \label{eq:lik_gaussian}
		\begin{aligned}
			l_{X\rightarrow Y} (\theta ) = & \frac{1}{m}
			\sum ^{m}_{i=1}\log\left(\frac{1}{\sqrt{2\pi }}\exp\left( -\frac{(x^{( i)} -0)^{2}}{2}\right)\right) \\
			& + \frac{1}{m} \sum ^{m}_{i=1}\log\left(\frac{1}{\sqrt{2\pi }}\exp\left( -\frac{(y^{( i)} -f (x^{( i)} ;\theta ))^{2}}{2}\right)\right) ,
		\end{aligned}
	\end{equation}
	is equivalent to minimizing the least-square loss $LS_{X \to Y}$,
	\begin{equation} \label{eq:lemma_lsloss}
		LS_{X\rightarrow Y} =\mathbb{E}\left[ (X-0)^{2}\right] +\mathbb{E}\left[ (Y-f (X))^{2}\right] .
	\end{equation}
\end{lemma}
\begin{proof}
	Based on Lemma \ref{app: lemma 1}, as $m \rightarrow \infty$, the average log-likelihood under the standard Gaussian distribution has the following form:
	\begin{equation} \label{eq:gaussian_likelihood}
		\begin{aligned}
			 & \lim_{m \to \infty} l_{X\rightarrow Y} (\theta ) \\
			=&  \lim_{m \to \infty} \left(	\frac{1}{m} \sum ^{m}_{i=1}\log q(X=x^{( i)} )
			\right. \\ \phantom{=\;\;}
		    & \left. + \frac{1}{m}\sum ^{m}_{i=1}\log q(N_{Y}=y^{(i)}-f(x^{(i)}) ;\theta ) \right)\\
		    = & \mathbb{E}_{x,y \sim p(X,Y)} \left[ \log q(X=x) + \log q(N_Y=y-f(X); \theta) \right] \\
		    = &  \mathbb{E}_{x,y \sim p(X,Y)} \left[ 
		    \log\left( \frac{1}{\sqrt{2\pi }}\exp\left( -\frac{(X -0)^{2}}{2}\right)\right)
		    \right. \\ \phantom{=\;\;}
		    & \left. + \log\left( \frac{1}{\sqrt{2\pi }}\exp\left( -\frac{(Y-f(X))^{2}}{2}\right)\right) \right] \\
            = & -\frac{1}{2} \left( \mathbb{E}_{x,y \sim p(X,Y)}\left[ (X-0)^{2} + (Y-f (X))^{2}\right] +const \right) \\
            = & -\frac{1}{2} \left( LS_{X\rightarrow Y} +const \right),
      \end{aligned}
    \end{equation}
    where $q$ is the density of the standard Gaussian distribution, and the third equality is obtained by substituting the standard Gaussian distribution into $q$, and in the fourth equality, $const=\log 4\pi$.
\end{proof}

\section{Proof of Lemma \ref{lemma 3}}
\label{app_sec: proof of lemma 3}

In the following lemma, we bridge a connection between the least-square loss and mutual information for the additive noise model. Based on the following lemma, we can utilize the biased mutual information between the cause and the noise to analyze the least-square loss. The proof of Lemma \ref{lemma 3} is based on Lemma \ref{app: lemma 1} and Lemma \ref{lemma 2}.
\begin{lemma} \label{lemma 3}
	For the additive noise model, as $m \rightarrow \infty$, minimizing the least-square loss is equivalent to minimizing the mutual information under the standard Gaussian noise assumption with the following form:
	\begin{equation} \label{eq:gaussian mutual information}
	 \begin{aligned}
			   I_q(X, &  N_{Y} ;\theta ) 
			  =  \mathbb{E}_{x,y\sim p(X,Y)}\left[\log p(X=x,Y=y) 
			  \right. \\
			  &\left. -\log q(X=x) -\log q(N_Y=y - f(x) ;\theta ) \right] ,
	  \end{aligned}
	\end{equation}
	where $\displaystyle q$ is the density function of standard Gaussian distribution.
\end{lemma}
\begin{proof}
    Let $q$ denote the density function of standard Gaussian distribution. By substituting $q$ into Eq. \ref{eq:mutual information vs likelihood} in Lemma \ref{app: lemma 1}, the mutual information under the standard Gaussian noise assumption can be rewritten as:
    \begin{equation*} 
      \begin{aligned}
         & I_q(X,N_{Y} ;\theta ) \\
		 =&  \mathbb{E}_{x,y\sim p(X,Y)}\left[\log p(X=x,Y=y) \right. \\
		  &\left. -\log q(X=x) -\log q(N_Y=y - f(x) ;\theta ) \right] \\
		= &  \mathbb{E}_{x,y\sim p(X,Y)} \left[ \log p(X=x,Y=y) \right]
		+ \frac{1}{2}\left( LS_{X \rightarrow Y} + const \right),
      \end{aligned}
    \end{equation*}
    where the second equality is based on Eq. \ref{eq:gaussian_likelihood} in Lemma \ref{lemma 2}.
\end{proof}

\section{Proof of Theorem \ref{app: theorem:mse nonlinear}} \label{app_sec: proof of theorem 2}

\setcounter{theorem}{1}
In the following theorem, we establish a certain condition under which the least-square loss can not identify the causal direction, indicating the least-square loss is not a suitable loss for the task of causal discovery. The proof of Theorem \ref{app: theorem:mse nonlinear} is based on Lemma \ref{lemma 3}. 

\begin{theorem} Let $X \to Y$ be the causal direction following the data generation mechanism $Y = f(X) + N_Y$ and we assume $m \rightarrow \infty$. The causal direction is non-identifiable using least-square loss if the following inequality holds: \label{app: theorem:mse nonlinear}
	\begin{equation} \label{app: inequality:mse nonlinear}
		\begin{aligned}
			-\int p(X)\log q(X)dX -\int p(N_{Y})\log q(N_{Y})dN_{Y}  \\
			> -\int p(Y)\log q(Y)dY -\int p(\hat{N}_{X} )\log q(\hat{N}_{X} )d\hat{N}_{X},
		\end{aligned}
	\end{equation}
	where q is the density function of standard Gaussian distribution.
\end{theorem}

\begin{proof}
	Without loss of generality, we assume the identifiability condition of the additive noise model holds. Then, we have $I(X,N_{Y} )< I(Y,\hat{N}_{X} )$. However, based on Eq. \ref{eq:gaussian mutual information} of Lemma \ref{lemma 3} as $m \rightarrow \infty$, using the least-square loss, the mutual information becomes:
	\begin{equation*}  
		\begin{aligned}
		     I_q(X ,  N_{Y}  ) 
			  =&   \mathbb{E}_{x,y\sim p(X,Y)}\left[\log p(X=x,Y=y) 
			  \right. \\
			  &\left. -\log q(X=x) -\log q(N_Y=y - f(x)  ) \right] ,\\
		    I_q(Y , \hat{N}_{X}  )  
			  = & \mathbb{E}_{x,y\sim p(X,Y)}\left[\log p(X=x,Y=y) 
			  \right. \\
			  &\left. -\log q(Y=x) -\log q(\hat{N}_{X}=x - g(y)  ) \right] , 
		\end{aligned}
	\end{equation*}
	where $q\sim N(0,\mathbf{I})$. In this case, the inequality $\displaystyle I_{q} (X,N_{Y} )< I_{q} (Y,\hat{N}_{X} )$ does not necessarily hold. In fact, by solving the inequality $\displaystyle I_{q} (X,N_{Y} ) >I_{q} (Y,\hat{N}_{X} )$, we obtain :
	\begin{equation}\label{app: theorem2_proof_eq:MI}
	    \begin{aligned}
	      \mathbb{E}_{p(X)}\left[ -\log q(X) \right] + \mathbb{E}_{p(X,Y)}\left[ -\log q(N_Y) \right] \\ >
	      \mathbb{E}_{p(Y)}\left[ -\log q(X)  \right] + \mathbb{E}_{p(X,Y)}\left[ -\log q(\hat{N}_X) \right] .
	    \end{aligned}
	\end{equation}
	Note that, the expectation of $N_Y$ can be rewritten as: 
	\begin{equation} \label{app: theorem2_proof_eq:qny}
	    \begin{aligned}
              \mathbb{E}_{ p(X,Y)}\left[ -\log q(N_Y) \right]  
             =& \mathbb{E}_{ p(X,N_Y)}\left[ -\log q(N_Y  ) \right] \\
             =& \mathbb{E}_{ p(N_Y)}\left[ -\log q(N_Y  ) \right] ,
	    \end{aligned}
	\end{equation}
	in which the first equality holds due to Proposition \ref{pro:dist tran} (see Appendix \ref{app_sec:proof of pro1}), and the second equality is obtained by integrating $X$.
	Similarly, we have 
	\begin{equation} \label{app: theorem2_proof_eq:qnx}
	    \mathbb{E}_{ p(X,Y)}\left[ -\log q(\hat{N}_X) \right] = \mathbb{E}_{p(\hat{N}_X)}\left[ -\log q(\hat{N}_X) \right].
	\end{equation}
	By substituting Eq. \ref{app: theorem2_proof_eq:qny} and Eq. \ref{app: theorem2_proof_eq:qnx} into inequality \ref{app: theorem2_proof_eq:MI}, we obtain inequality \ref{app: inequality:mse nonlinear}
\end{proof}

\section{Proof of Theorem \ref{app theorem entropy vs likelihood}}
\label{app_sec: proof of theorem 3}

\setcounter{theorem}{2}

In the following theorem, we show that, for the additive noise model, maximizing the log-likelihood score is equivalent to maximizing the entropy of noise. It inspires our entropy loss-based method for causal discovery.
\begin{theorem} \label{app theorem entropy vs likelihood}
	In the additive noise model, the entropy-based score has a consistency with the log-likelihood score when the sample sizes $m \rightarrow \infty $, i.e.,
	\begin{equation}
		\lim_{m \to \infty}\frac{1}{m}\sum ^{m}_{j=1}\sum ^{d}_{i=1}\log p\left( x^{(j)}_{i} |x^{( j)}_{pa( i)}\right) =-\sum ^{d}_{i=1} H(N_{i} ).
	\end{equation}
\end{theorem}

\begin{proof}
	Based on Lemma \ref{app: lemma 1} we generalize the log-likelihood from two variables case to the multivariate case using Proposition \ref{pro:dist tran} as follows:
	\begin{equation*} 
		\begin{aligned}
			& \frac{1}{m}\sum ^{m}_{j=1}\sum ^{d}_{i=1}\log p\left( x^{(j)}_{i} |x^{( j)}_{pa( i)}\right) \\
			 = &	\frac{1}{m}\sum ^{m}_{j=1}\sum ^{d}_{i=1}\log p\left( N_{i} =x^{( j)}_{pa( i)} - f_i(x^{(j)}_{i}) \right).
		\end{aligned}
	\end{equation*}
	Then, as $m \rightarrow \infty$, we have:
		\begin{equation*}
		\begin{aligned}
			& 	\lim_{m \to \infty} \frac{1}{m}\sum ^{m}_{j=1}\sum ^{d}_{i=1}\log p\left( N_{i} =x^{( j)}_{pa( i)} - f_i(x^{(j)}_{i}) \right) \\
			& = - \sum ^{d}_{i=1} \mathbb{E} \left[ - \log p\left( N_{i} \right) \right] \\
			& = - \sum ^{d}_{i=1} H(N_{i} ).
		\end{aligned}
	\end{equation*}
\end{proof}

\section{Proof of Corollary \ref{app corollary:ent asymmetry}}
\label{app_sec: proof of corollary 1}

In the following corollary we show that, for the additive noise model, the entropy-based loss can identify the causal direction correctly. It promises the correctness of our method. The proof of Corollary \ref{app corollary:ent asymmetry} is based on the independence property of ANM, i.e., $I(X, N_Y)<I(Y,\hat{N}_X)$. 
\begin{corollary} \label{app corollary:ent asymmetry}
	For each pair of additive noise model $Y=f(X)+N_Y$, if $v^{\prime\prime}(Y-f(X))f^{\prime}(X) \neq 0$ and the condition in Eq. \ref{eq:anm condition}  does not hold, then using the entropy-based loss each pair of additive noise model is identifiable and the following inequality holds: 
	\begin{equation} \label{app inequality: entropy}
		H\left(X\right) + H\left(N_{Y}\right) < H\left(Y\right) + H\left(\hat{N}_{X}\right).
	\end{equation}
\end{corollary}

\begin{proof}
To prove corollary \ref{app corollary:ent asymmetry}, we will compare the mutual information between the hypothetical cause and the regression residual in causal and anti-causal directions. Based on Lemma 4, if Eq. \ref{eq:anm condition}  does not hold, i.e., no backward ANM model exists, then we must have the following equation:
\begin{equation*}
    \begin{aligned}
       p(X,Y) =  p(N_Y)p(X), \\
       p(X,Y) \neq  p(\hat{N}_X)p(Y),
    \end{aligned}
\end{equation*}
from the causal direction and anti-causal direction, respectively. It means that the noise and the hypothetical cause in the anti-causal direction are not independent of each other, i.e., $N_Y  \Vbar X $ in the causal direction but $\hat{N}_X \nVbar Y$ in the anti-causal direction. Such a property can be represented by mutual information as follows:
\begin{equation*} 
	\begin{aligned}
	    I(X, N_Y)  = 0 \quad  and \quad  I(Y, \hat{N}_X) > 0 \\
	\end{aligned}
\end{equation*}	
Hence $I(X, N_Y) < I(Y, \hat{N}_X) $, i.e.,
\begin{equation*} 
  \begin{aligned}
	\int p(X,N_Y) \log \frac{p(X,N_Y)}{p(X)p(N_Y)} dXdN_Y 
	\\ < \int p(Y,\hat{N}_X) \log \frac{p(Y,\hat{N}_X)}{p(Y)p(\hat{N}_X)} dYd\hat{N}_X .
  \end{aligned}
\end{equation*}
By rewriting the inequality above using entropy, we have:
	\begin{equation} \label{app MI compare}
		\begin{aligned}
			  H\left(X\right) + H\left(N_{Y}\right) +\mathbb{E}\log p(X,N_Y) \\
			   < H\left(Y\right) + H\left(\hat{N}_{X}\right) + \mathbb{E}\log p(Y,\hat{N}_X).
		\end{aligned}
	\end{equation}
	By utilizing Proposition \ref{pro:dist tran}, we have
	\begin{equation}\label{app eq:equal}
	 p(X,Y)=p(X,N_Y)=p(Y,\hat{N}_X).
	\end{equation}
	By substituting Eq. \ref{app eq:equal} into Eq. \ref{app MI compare}, we obtain:
    \begin{equation*}
		\begin{aligned}
			  H\left(X\right) + H\left(N_{Y}\right) +\mathbb{E}\log p(X,Y) \\
			   < H\left(Y\right) + H\left(\hat{N}_{X}\right) + \mathbb{E}\log p(X.Y),
		\end{aligned}
	\end{equation*}
    and by eliminating the same term of both sides, we obtain:
	\begin{equation*}
	    H(X) + H(N_Y) < H(Y) + H(\hat{N}_X),
	\end{equation*}
	which finishes the proof.
\end{proof}

\section{Additional Experimental Details}

For a better presentation of the experimental results, we provide the precise values of Figure \ref{figure 2} and \ref{figure 3} in Table \ref{tab:table_result1} and \ref{tab:table_result2} respectively.

\begin{table*}[!ht]
\caption{Precise Results on the Synthetic Linear Data}
\label{tab:table_result1}
\begin{center}
\begin{tabular}{c|c|c|ccccc} 
    \hline
	\multirow{2}{*}{Metric} & \multirow{2}{*}{Control Param} & \multirow{2}{*}{Value} & ours & NOTEARS & GOLEM-NV   & DAG-GNN   & LiNGAM        \\ 
	\cline{4-8}
	 & \ & \  & mean  $_{\pm \text{ std}}$ & mean   $_{\pm \text{ std}}$  & mean   $_{\pm \text{ std}}$    & mean    $_{\pm \text{ std}}$   & mean $_{\pm \text{ std}}$   \\ 
	\hline
	 
	\multirow{17}{*}{SHD} 
	 & \multirow{5}{*}{sample size} 
	 & 200 
	    & {8.2333} $_{\pm \text{ 5.2197}}$ & 14.2000 $_{\pm \text{ 4.9288}}$ & 12.0000 $_{\pm \text{ 5.4160}}$ & 29.7333 $_{\pm \text{ 5.1376}}$ & 18.6666 $_{\pm \text{ 7.4490}}$   \\ 
	 & & 400   
	    & 7.0333 $_{\pm \text{ 7.6397}}$ & 13.3000 $_{\pm \text{ 5.1778}}$ & 10.1333 $_{\pm \text{ 5.6905}}$ & 23.0000 $_{\pm \text{ 6.6882}}$ & {2.7000} $_{\pm \text{ 4.5981}}$   \\ 
	 & & 600   
	    & 6.0333 $_{\pm \text{ 6.7353}}$ & 12.9333 $_{\pm \text{ 5.3287}}$ & 9.7333 $_{\pm \text{ 5.3287}}$ & 20.3333 $_{\pm \text{ 5.8783}}$ & {0.9000} $_{\pm \text{ 1.1357}}$   \\ 
	 & & 800   
	    & 4.8666 $_{\pm \text{ 4.3568}}$ & 12.5333 $_{\pm \text{ 5.2582}}$ & 10.6333 $_{\pm \text{ 6.6557}}$ & 20.6000 $_{\pm \text{ 6.0310}}$ & {0.8333} $_{\pm \text{ 1.0027}}$   \\ 
	 & & 1000   
	    & 3.8000 $_{\pm \text{ 4.6000}}$ & 15.3666 $_{\pm \text{ 6.8141}}$ & 13.1333 $_{\pm \text{ 6.3809}}$ & 21.1333 $_{\pm \text{ 5.8636}}$ & {0.7666} $_{\pm \text{ 0.7608}}$   \\ 
	 \cline{2-8}
	 & \multirow{7}{*}{number of variable} 
	 & 5  
	    & {0.6333} $_{\pm \text{ 1.6856}}$ & 2.8333 $_{\pm \text{ 1.8454}}$ & 1.5333 $_{\pm \text{ 1.3597}}$ & 7.7666 $_{\pm \text{ 2.0765}}$ & 0.0000 $_{\pm \text{ 0.0000}}$   \\ 
	 & & 10   
	    & {2.5666} $_{\pm \text{ 4.0880}}$ & 8.6666 $_{\pm \text{ 3.8844}}$ & 6.3667 $_{\pm \text{ 3.8426}}$ & 13.4333 $_{\pm \text{ 3.9045}}$ & 0.5666 $_{\pm \text{ 0.6674}}$   \\  
	 & & 15   
	 	& {2.5666} $_{\pm \text{ 3.0406}}$ & 12.8666 $_{\pm \text{ 5.1622}}$ & 9.7333 $_{\pm \text{ 5.3287}}$ & 20.3333 $_{\pm \text{ 5.8783}}$ &  0.9000 $_{\pm \text{ 1.1357}}$   \\ 
	 & & 20   
	 	& {3.5666} $_{\pm \text{ 4.0635}}$ & 12.9666 $_{\pm \text{ 7.8165}}$ & 13.9667 $_{\pm \text{ 6.9545}}$ & 30.0333 $_{\pm \text{ 11.2412}}$ & 1.5666 $_{\pm \text{ 1.2297}}$   \\ 
	 & & 25   
	 	& {5.1000} $_{\pm \text{ 3.9778}}$ & 18.4333 $_{\pm \text{ 7.5351}}$ & 19.8333 $_{\pm \text{ 7.9920}}$ & 33.6333 $_{\pm \text{ 7.4228}}$ & 3.9333 $_{\pm \text{ 3.8981}}$   \\ 
	 & & 50   
	 	& {7.5666} $_{\pm \text{ 4.8695}}$ &  20.9000 $_{\pm \text{ 7.0349}}$ & 40.4667 $_{\pm \text{ 9.1423}}$ & 62.6666 $_{\pm \text{ 9.4210}}$ & 105.9000 $_{\pm \text{ 17.6943}}$   \\ 
	 & & 100   
	 	& {15.9000} $_{\pm \text{ 7.5026}}$ & 35.4333 $_{\pm \text{ 11.7776}}$ & 92.2000 $_{\pm \text{ 19.0725}}$ & 135.7000 $_{\pm \text{34.8072}}$ & 216.1666 $_{\pm \text{ 21.0223}}$   \\ 
	 \cline{2-8}
	 & \multirow{7}{*}{number of variable} 
	 & 5  
	    & {0.0000} $_{\pm \text{ 0.0000}}$ & 2.5333 $_{\pm \text{ 1.5648}}$ & 1.5333 $_{\pm \text{ 1.4545}}$ & 5.6333 $_{\pm \text{ 2.0893}}$ & 0.2000 $_{\pm \text{ 0.5416}}$   \\ 
	 & & 10   
	    & {0.8333} $_{\pm \text{ 2.5309}}$ & 8.5666 $_{\pm \text{ 3.4125}}$ & 6.4000 $_{\pm \text{ 4.0050}}$ & 14.7666 $_{\pm \text{ 2.7164}}$ & 7.6000 $_{\pm \text{ 6.0915}}$   \\  
	 & & 15   
	 	& {0.1666} $_{\pm \text{ 0.4533}}$ & 14.3333 $_{\pm \text{ 5.0684}}$ & 9.7667 $_{\pm \text{ 4.5511}}$ & 18.0666 $_{\pm \text{ 4.7674}}$ & 21.1666 $_{\pm \text{ 11.0305}}$   \\ 
	 & & 20   
	 	& {1.4333} $_{\pm \text{ 1.4067}}$ & 16.6666 $_{\pm \text{ 8.8894}}$ & 14.0000 $_{\pm \text{ 7.8655}}$ & 22.8666 $_{\pm \text{ 7.8133}}$ & 36.9000 $_{\pm \text{ 15.2738}}$   \\ 
	 & & 25   
	 	& {0.4000} $_{\pm \text{0.6110}}$ & 23.0000 $_{\pm \text{  7.4072}}$ & 19.1333 $_{\pm \text{ 7.0462}}$ & 34.4333 $_{\pm \text{ 7.6056}}$ & 52.0666 $_{\pm \text{ 16.1883}}$   \\ 
	 & & 50   
	 	& {1.4000} $_{\pm \text{ 2.1228}}$ & 23.4000 $_{\pm \text{ 9.0428}}$ & 41.8000 $_{\pm \text{ 8.2438}}$ & 57.4666 $_{\pm \text{ 11.2241}}$ & 125.2666 $_{\pm \text{ 23.4036}}$   \\ 
	 & & 100   
	 	& {4.4000} $_{\pm \text{ 4.4766}}$ & 37.4666 $_{\pm \text{ 10.5158}}$ & 91.9000 $_{\pm \text{ 16.7120}}$ & 163.5000 $_{\pm \text{ 64.6151}}$ & 235.3000 $_{\pm \text{ 19.6793}}$   \\ 
	 \hline 
	
	\multirow{17}{*}{FDR} 
	 	 & \multirow{5}{*}{sample size} 
	 & 200 
	    & {0.1953} $_{\pm \text{ 0.1145}}$ & 0.3230 $_{\pm \text{ 0.0989}}$ & 0.2758 $_{\pm \text{ 0.1146}}$ & 0.2502 $_{\pm \text{ 0.4185}}$ & 0.3401 $_{\pm \text{ 0.2229}}$   \\ 
	 & & 400   
	    & {0.1586} $_{\pm \text{ 0.1560}}$ & 0.3093 $_{\pm \text{ 0.1045}}$ & 0.2470 $_{\pm \text{ 0.1244}}$ & 0.4269 $_{\pm \text{ 0.2042}}$ & 0.0665 $_{\pm \text{ 0.1058}}$   \\ 
	 & & 600   
	    & {0.1373} $_{\pm \text{ 0.1425}}$ & 0.3012 $_{\pm \text{ 0.1117}}$ & 0.2370 $_{\pm \text{ 0.1196}}$ & 0.3150 $_{\pm \text{  0.1542}}$ & 0.0258 $_{\pm \text{  0.0337}}$   \\ 
	 & & 800   
	    &  {0.1208} $_{\pm \text{ 0.1022}}$ & 0.2860 $_{\pm \text{ 0.1192}}$ & 0.2497 $_{\pm \text{ 0.1465}}$ & 0.3524 $_{\pm \text{ 0.1720}}$ & 0.0250 $_{\pm \text{  0.0297}}$   \\ 
	 & & 1000   
	    & {0.0836} $_{\pm \text{ 0.0974}}$ & 0.3454 $_{\pm \text{ 0.1363}}$ & 0.3063 $_{\pm \text{ 0.1280}}$ & 0.3239 $_{\pm \text{ 0.1757}}$ & 0.0243 $_{\pm \text{ 0.0236}}$   \\ 
	 \cline{2-8}
	 & \multirow{7}{*}{number of variable} 
	 & 5  
	    & {0.0342} $_{\pm \text{ 0.0589}}$ & 0.1634 $_{\pm \text{ 0.1173}}$ & 0.0997 $_{\pm \text{ 0.0864}}$ & 0.1734 $_{\pm \text{0.3153}}$ & 0.0000 $_{\pm \text{ 0.0000}}$   \\ 
	 & & 10   
	    & {0.0776} $_{\pm \text{ 0.1401}}$ & 0.2865 $_{\pm \text{ 0.1346}}$ & 0.2127 $_{\pm \text{ 0.1268}}$ & 0.3322 $_{\pm \text{ 0.1607}}$ & 0.0251 $_{\pm \text{ 0.0288}}$   \\  
	 & & 15   
	 	& {0.0447} $_{\pm \text{ 0.0650}}$ & 0.3055 $_{\pm \text{ 0.1046}}$ & 0.2370 $_{\pm \text{ 0.1196}}$ & 0.3150 $_{\pm \text{ 0.1542}}$ & 0.0258 $_{\pm \text{ 0.0337}}$   \\ 
	 & & 20   
	 	& {0.0527} $_{\pm \text{0.0662}}$ & 0.2263 $_{\pm \text{ 0.1286}}$ & 0.2528 $_{\pm \text{ 0.1078}}$ & 0.3788 $_{\pm \text{ 0.3075}}$ & 0.0333 $_{\pm \text{ 0.0254}}$   \\ 
	 & & 25   
	 	& {0.0521} $_{\pm \text{ 0.0500}}$ & 0.2596 $_{\pm \text{ 0.0952}}$ & 0.2821 $_{\pm \text{ 0.0992}}$ & 0.3594 $_{\pm \text{ 0.1523}}$ & 0.0624 $_{\pm \text{ 0.0578}}$   \\ 
	 & & 50   
	 	& {0.0392} $_{\pm \text{ 0.0301}}$ & 0.1293 $_{\pm \text{ 0.0439}}$ & 0.2857 $_{\pm \text{ 0.0546}}$ & 0.3526 $_{\pm \text{ 0.0634}}$ & 0.5957 $_{\pm \text{ 0.0595}}$   \\ 
	 & & 100   
	 	& {0.0365} $_{\pm \text{ 0.0185}}$ & 0.1100 $_{\pm \text{ 0.0354}}$ & 0.3231 $_{\pm \text{ 0.0521}}$ & 0.3929 $_{\pm \text{ 0.1637}}$ & 0.5994 $_{\pm \text{ 0.0411}}$   \\ 
	 \cline{2-8}
	 & \multirow{7}{*}{number of variable} 
	 & 5  
	    & {0.0000} $_{\pm \text{ 0.0000}}$ & 0.1583 $_{\pm \text{ 0.1042}}$ & 0.1005 $_{\pm \text{ 0.0963}}$ & 0.3453 $_{\pm \text{ 0.2403}}$ & 0.0200 $_{\pm \text{ 0.0541}}$   \\ 
	 & & 10   
	    & {0.0223} $_{\pm \text{ 0.0703}}$ & 0.2875 $_{\pm \text{ 0.1037}}$ & 0.2161 $_{\pm \text{ 0.1287}}$ &  0.3966 $_{\pm \text{ 0.1426}}$ & 0.2727 $_{\pm \text{ 0.1906}}$   \\  
	 & & 15   
	 	& {0.0000} $_{\pm \text{ 0.0000}}$ & 0.3403 $_{\pm \text{ 0.1046}}$ & 0.2385 $_{\pm \text{ 0.1108}}$ & 0.3074 $_{\pm \text{ 0.1237}}$ & 0.4451 $_{\pm \text{ 0.2000}}$   \\ 
	 & & 20   
	 	& {0.0000} $_{\pm \text{ 0.0000}}$ & 0.2938 $_{\pm \text{ 0.1327}}$ & 0.2166 $_{\pm \text{ 0.1197}}$ & 0.3203 $_{\pm \text{ 0.1353}}$ & 0.5505 $_{\pm \text{ 0.1674}}$   \\ 
	 & & 25   
	 	& {0.0006} $_{\pm \text{ 0.0035}}$ & 0.3250 $_{\pm \text{ 0.0873}}$ & 0.2713 $_{\pm \text{ 0.0903}}$ & 0.3729 $_{\pm \text{ 0.1224}}$ & 0.5962 $_{\pm \text{ 0.1170}}$   \\ 
	 & & 50   
	 	& {0.0050} $_{\pm \text{ 0.0106}}$ & 0.1457 $_{\pm \text{ 0.0560}}$ & 0.3004 $_{\pm \text{ 0.0495}}$ & 0.3159 $_{\pm \text{ 0.0704}}$ &  0.6680 $_{\pm \text{ 0.0746}}$   \\ 
	 & & 100   
	 	& {0.0123} $_{\pm \text{ 0.0148}}$ & 0.1154 $_{\pm \text{ 0.0341}}$ & 0.3111 $_{\pm \text{ 0.0515}}$ & 0.4559 $_{\pm \text{ 0.1661}}$ &  0.6460 $_{\pm \text{ 0.0375}}$   \\ 
	 \hline 
	
	\multirow{17}{*}{TPR} 
	 & \multirow{5}{*}{sample size} 
	 & 200 
	    & {0.8488} $_{\pm \text{ 0.1137}}$ & 0.7122 $_{\pm \text{0.1100}}$ & 0.7622 $_{\pm \text{ 0.1141}}$ & 0.0322 $_{\pm \text{ 0.1552}}$ & 0.5255 $_{\pm \text{ 0.1562}}$   \\ 
	 & & 400   
	    & {0.8777} $_{\pm \text{ 0.1396}}$ & 0.7222 $_{\pm \text{  0.1245}}$ & 0.8000 $_{\pm \text{  0.1054}}$ & 0.3655 $_{\pm \text{ 0.1804}}$ & 0.9655 $_{\pm \text{ 0.0892}}$   \\ 
	 & & 600   
	    & {0.9011} $_{\pm \text{ 0.0737}}$ & 0.7544 $_{\pm \text{ 0.0983}}$ & 0.8278 $_{\pm \text{ 0.0988}}$ & 0.4255 $_{\pm \text{ 0.1939}}$ & 0.9977 $_{\pm \text{  0.0083}}$   \\ 
	 & & 800   
	    & {0.9266} $_{\pm \text{ 0.0701}}$ & 0.7633 $_{\pm \text{ 0.0963}}$ & 0.8056 $_{\pm \text{ 0.1244}}$ & 0.4455 $_{\pm \text{ 0.1588}}$ & 0.9988 $_{\pm \text{ 0.0059}}$   \\ 
	 & & 1000   
	    & {0.9144} $_{\pm \text{0.0980}}$ & 0.6944 $_{\pm \text{ 0.1285}}$ & 0.7689 $_{\pm \text{ 0.0921}}$ & 0.3866 $_{\pm \text{ 0.1567}}$ & 1.0000 $_{\pm \text{ 0.0000}}$   \\ 
	 \cline{2-8}
	 & \multirow{7}{*}{number of variable} 
	 & 5  
	    & {0.9366} $_{\pm \text{ 0.1168}}$ & 0.7166 $_{\pm \text{ 0.1845}}$ & 0.8467 $_{\pm \text{ 0.1360}}$ & 0.2233 $_{\pm \text{ 0.2076}}$ & 1.0000 $_{\pm \text{ 0.0000}}$   \\ 
	 & & 10   
	    & {0.9016} $_{\pm \text{ 0.1546}}$ & 0.7183 $_{\pm \text{ 0.1228}}$ & 0.7933 $_{\pm \text{ 0.1086}}$ & 0.4366 $_{\pm \text{ 0.1682}}$ & 0.9983 $_{\pm \text{ 0.0089}}$   \\  
	 & & 15   
	 	& {0.9333} $_{\pm \text{ 0.0730}}$ & 0.7655 $_{\pm \text{ 0.0948}}$ & 0.8278 $_{\pm \text{ 0.0985}}$ & 0.4255 $_{\pm \text{ 0.1939}}$ & 0.9977 $_{\pm \text{ 0.0083}}$   \\ 
	 & & 20   
	 	& {0.9324} $_{\pm \text{ 0.0704}}$ & 0.7875 $_{\pm \text{ 0.1069}}$ & 0.7992 $_{\pm \text{ 0.1161}}$ & 0.3591 $_{\pm \text{ 0.2793}}$ & 0.9958 $_{\pm \text{  0.0113}}$   \\ 
	 & & 25   
	 	& {0.9173} $_{\pm \text{ 0.0555}}$ & 0.7760 $_{\pm \text{ 0.0844}}$ & 0.7852 $_{\pm \text{ 0.0881}}$ & 0.4473 $_{\pm \text{ 0.1411}}$ & 0.9873 $_{\pm \text{ 0.0222}}$   \\ 
	 & & 50   
	 	& {0.9426} $_{\pm \text{ 0.0352}}$ & 0.8370 $_{\pm \text{ 0.0571}}$ & 0.7770 $_{\pm \text{ 0.0755}}$ & 0.5130 $_{\pm \text{ 0.0985}}$ & 0.6140 $_{\pm \text{ 0.0635}}$   \\ 
	 & & 100   
	 	& {0.9351} $_{\pm \text{ 0.0280}}$ & 0.8646 $_{\pm \text{ 0.0423}}$ & 0.7768 $_{\pm \text{ 0.0689}}$ & 0.4200 $_{\pm \text{ 0.1813}}$ & 0.5623 $_{\pm \text{ 0.0420}}$   \\ 
	 \cline{2-8}
	 & \multirow{7}{*}{number of variable} 
	 & 5  
	    & {1.0000} $_{\pm \text{ 0.0000}}$ & 0.7466 $_{\pm \text{ 0.1564}}$ & 0.8467 $_{\pm \text{ 0.1454}}$ & 0.4366 $_{\pm \text{ 0.2089}}$ & 0.9800 $_{\pm \text{ 0.0541}}$   \\ 
	 & & 10   
	    & {0.9666} $_{\pm \text{ 0.0933}}$ & 0.7150 $_{\pm \text{ 0.1057}}$ & 0.7950 $_{\pm \text{ 0.1127}}$ & 0.4100 $_{\pm \text{ 0.1540}}$ & 0.8283 $_{\pm \text{ 0.1492}}$   \\  
	 & & 15   
	 	& {0.9944} $_{\pm \text{ 0.0151}}$ & 0.7400 $_{\pm \text{ 0.0908}}$ & 0.8300 $_{\pm \text{ 0.0845}}$ & 0.5110 $_{\pm \text{ 0.1698}}$ & 0.7211 $_{\pm \text{ 0.1471}}$   \\ 
	 & & 20   
	 	& {0.9641} $_{\pm \text{ 0.0351}}$ & 0.7616 $_{\pm \text{ 0.1198}}$ & 0.7975 $_{\pm \text{ 0.1224}}$ & 0.5541 $_{\pm \text{ 0.1773}}$ &  0.6450 $_{\pm \text{ 0.1616}}$   \\ 
	 & & 25   
	 	& {0.9920} $_{\pm \text{ 0.0122}}$ & 0.7433 $_{\pm \text{ 0.0824}}$ & 0.7907 $_{\pm \text{ 0.0744}}$ & 0.4326 $_{\pm \text{ 0.1540}}$ & 0.6166 $_{\pm \text{ 0.1131}}$   \\ 
	 & & 50   
	 	& {0.9886} $_{\pm \text{ 0.0145}}$ & 0.8193 $_{\pm \text{ 0.0744}}$ & 0.7827 $_{\pm \text{0.0582}}$ & 0.5556 $_{\pm \text{ 0.1192}}$ & 0.5263 $_{\pm \text{ 0.0695}}$   \\ 
	 & & 100   
	 	& {0.9849} $_{\pm \text{ 0.0136}}$ & 0.8583 $_{\pm \text{ 0.0389}}$ & 0.7610 $_{\pm \text{ 0.0711}}$ & 0.4091 $_{\pm \text{ 0.2030}}$ & 0.5020 $_{\pm \text{ 0.0446}}$   \\ 
	 \hline 
\end{tabular}
\end{center}
\end{table*}

\begin{table*}[!ht]
\caption{Precise Results on the Synthetic Nonlinear Data}
\label{tab:table_result2}
\begin{center}

\begin{tabular}{c|c|c|cccc} 
    \hline
	\multirow{2}{*}{Metric} & \multirow{2}{*}{Control Param} & \multirow{2}{*}{Value} & ours-MLP & NOTEARS-MLP & DAG-GNN   & GranDAG           \\ 
	\cline{4-7}
	 & \ & \  & mean  $_{\pm \text{ std}}$ & mean   $_{\pm \text{ std}}$  & mean   $_{\pm \text{ std}}$    & mean    $_{\pm \text{ std}}$    \\ 
	\hline
	 
	\multirow{17}{*}{SHD} 
	 & \multirow{5}{*}{sample size} 
	 & 200 
	    & {16.7000} $_{\pm \text{ 6.1814}}$ & 29.1000 $_{\pm \text{ 5.1662}}$ & 25.3000 $_{\pm \text{ 3.9509}}$ & 43.0000 $_{\pm \text{ 4.1231}}$    \\ 
	 & & 400   
	    & {8.0000} $_{\pm \text{ 1.9493}}$ & 24.5000 $_{\pm \text{ 3.2634}}$ & 20.9000 $_{\pm \text{ 2.9137}}$ & 40.4000 $_{\pm \text{ 6.7705}}$    \\ 
	 & & 600   
	    & {5.1000} $_{\pm \text{ 2.2113}}$ & 15.0000 $_{\pm \text{ 4.4045}}$ & 22.3000 $_{\pm \text{ 4.5617}}$ & 38.4000 $_{\pm \text{ 5.9363}}$    \\ 
	 & & 800   
	    & {4.6000} $_{\pm \text{ 1.6852}}$ & 10.8000 $_{\pm \text{ 3.2496}}$ & 22.5000 $_{\pm \text{ 2.9410}}$ & 39.0000 $_{\pm \text{ 4.8785}}$    \\ 
	 & & 1000   
	    & {1.8000} $_{\pm \text{ 1.1661}}$ & 10.3000 $_{\pm \text{ 6.0671}}$ & 14.2000 $_{\pm \text{ 4.9558}}$ & 36.3000 $_{\pm \text{ 3.5791}}$    \\ 
	 \cline{2-7}
	 & \multirow{7}{*}{number of variable} 
	 & 5  
	    & {0.3000} $_{\pm \text{ 0.4582}}$ & 2.3000 $_{\pm \text{ 2.2383}}$ & 6.9000 $_{\pm \text{ 1.2206}}$ & 6.8000 $_{\pm \text{ 2.2271}}$    \\ 
	 & & 10   
	    & {2.2000} $_{\pm \text{ 1.4696}}$ & 6.2000 $_{\pm \text{ 3.4292}}$ & 15.4000 $_{\pm \text{ 3.6110}}$ & 26.6000 $_{\pm \text{ 3.2000}}$    \\ 
	 & & 15   
	    & {5.1000} $_{\pm \text{ 2.2113}}$ & 15.0000 $_{\pm \text{ 4.4045}}$ & 22.3000 $_{\pm \text{ 4.5617}}$ & 38.4000 $_{\pm \text{ 5.9363}}$    \\ 
	 & & 20   
	    & {7.7000} $_{\pm \text{ 2.9681}}$ & 24.7000 $_{\pm \text{ 4.7339}}$ & 25.7000 $_{\pm \text{ 3.7960}}$ & 64.4000 $_{\pm \text{ 12.0432}}$    \\ 
	 & & 25   
	    & {8.4000} $_{\pm \text{ 3.7469}}$ & 47.5000 $_{\pm \text{ 8.8797}}$ & 29.7000 $_{\pm \text{ 3.5227}}$ & 73.9000 $_{\pm \text{ 8.6769}}$    \\ 
	 & & 50   
	    & {40.0000} $_{\pm \text{ 4.5825}}$ & 77.4000 $_{\pm \text{ 7.2828}}$ & 60.5000 $_{\pm \text{ 6.3757}}$ & 176.2000 $_{\pm \text{ 36.3670}}$    \\ 
	 & & 100   
	    & {113.4000} $_{\pm \text{ 6.6813}}$ & 143.7000 $_{\pm \text{ 10.4216}}$ & 137.8000 $_{\pm \text{ 17.2846}}$ & 339.5000 $_{\pm \text{ 22.9183}}$    \\  
	 \cline{2-7}
	 & \multirow{5}{*}{noise variance} 
	 & 1  
	    & {5.5000} $_{\pm \text{ 2.4186}}$ & 5.6000 $_{\pm \text{ 2.4979}}$ & 23.8000 $_{\pm \text{ 5.8446}}$ & 29.7000 $_{\pm \text{ 3.0675}}$    \\ 
	 & & 2   
	    & {4.5000} $_{\pm \text{ 2.2472}}$ & 11.3000 $_{\pm \text{ 8.2589}}$ & 20.3000 $_{\pm \text{ 1.6155}}$ & 30.3000 $_{\pm \text{2.4515}}$    \\ 
	 & & 3   
	    & {5.1000} $_{\pm \text{ 2.2113}}$ & 15.0000 $_{\pm \text{ 4.4045}}$ & 22.3000 $_{\pm \text{ 4.5617}}$ & 33.2000 $_{\pm \text{ 5.8787}}$    \\ 
	 & & 4   
	 	& {5.5000} $_{\pm \text{ 2.5787}}$ & 27.2000 $_{\pm \text{ 3.9949}}$ & 19.7000 $_{\pm \text{ 3.0347}}$ & 32.7000 $_{\pm \text{ 4.5617}}$    \\ 
	 & & 5   
	 	& {6.9000} $_{\pm \text{ 1.7578}}$ & 30.7000 $_{\pm \text{ 5.8830}}$ & 21.4000 $_{\pm \text{ 3.6932}}$ & 37.5000 $_{\pm \text{ 6.2809}}$  \\ 
	 \hline 
	
	\multirow{17}{*}{FDR} 
	 & \multirow{5}{*}{sample size} 
	 	 & 200 
	    & {0.2600} $_{\pm \text{ 0.1287}}$ & 0.5053 $_{\pm \text{ 0.0986}}$ & 0.4189 $_{\pm \text{ 0.1527}}$ & 0.8850 $_{\pm \text{ 0.0662}}$    \\ 
	 & & 400   
	    & {0.0356} $_{\pm \text{ 0.0403}}$ & 0.4051 $_{\pm \text{ 0.0698}}$ & 0.3545 $_{\pm \text{ 0.1276}}$ & 0.8321 $_{\pm \text{ 0.0873}}$    \\ 
	 & & 600   
	    & {0.0530} $_{\pm \text{ 0.0420}}$ & 0.2618 $_{\pm \text{ 0.0744}}$ & 0.4034 $_{\pm \text{ 0.1099}}$ & 0.8588 $_{\pm \text{ 0.0904}}$    \\ 
	 & & 800   
	    & {0.0541} $_{\pm \text{ 0.0456}}$ & 0.1125 $_{\pm \text{ 0.0580}}$ & 0.2667 $_{\pm \text{ 0.1351}}$ & 0.8649 $_{\pm \text{0.1121}}$    \\ 
	 & & 1000   
	    & {0.0229} $_{\pm \text{ 0.0250}}$ & 0.1252 $_{\pm \text{ 0.0757}}$ & 0.2675 $_{\pm \text{ 0.0972}}$ & 0.8244 $_{\pm \text{ 0.0978}}$    \\ 
	 \cline{2-7}
	 & \multirow{7}{*}{number of variable} 
	 & 5  
	    & {0.0000} $_{\pm \text{ 0.0000}}$ & 0.0411 $_{\pm \text{ 0.0674}}$ & 0.3075 $_{\pm \text{ 0.2038}}$ & 0.5750 $_{\pm \text{ 0.2583}}$    \\ 
	 & & 10   
	    & {0.0429} $_{\pm \text{ 0.0499}}$ & 0.1261 $_{\pm \text{ 0.0811}}$ & 0.3317 $_{\pm \text{ 0.1872}}$ & 0.8040 $_{\pm \text{ 0.0764}}$    \\ 
	 & & 15   
	    & {0.0530} $_{\pm \text{ 0.0420}}$ & 0.2618 $_{\pm \text{ 0.0744}}$ & 0.4034 $_{\pm \text{ 0.1099}}$ & 0.8588 $_{\pm \text{ 0.0904}}$    \\ 
	 & & 20   
	    & {0.0993} $_{\pm \text{ 0.0573}}$ & 0.2209 $_{\pm \text{ 0.0701}}$ & 0.2793 $_{\pm \text{ 0.1049}}$ & 0.8588 $_{\pm \text{ 0.0320}}$    \\ 
	 & & 25   
	    & {0.0550} $_{\pm \text{ 0.0413}}$ & 0.4765 $_{\pm \text{ 0.0820}}$ & 0.2768 $_{\pm \text{ 0.0925}}$ & 0.8872 $_{\pm \text{ 0.0609}}$    \\ 
	 & & 50   
	    & {0.0590} $_{\pm \text{ 0.0189}}$ & 0.4161 $_{\pm \text{ 0.0427}}$ & 0.2493 $_{\pm \text{ 0.0725}}$ & 0.9668 $_{\pm \text{ 0.0137}}$    \\ 
	 & & 100   
	    & {0.0089} $_{\pm \text{ 0.0096}}$ & 0.2502 $_{\pm \text{ 0.0729}}$ & 0.2608 $_{\pm \text{ 0.0906}}$ & 0.9843 $_{\pm \text{ 0.0066}}$    \\  
	 \cline{2-7}
	 & \multirow{5}{*}{noise variance} 
	 & 1  
	    & {0.0994} $_{\pm \text{0.0543}}$ & 0.1122 $_{\pm \text{ 0.0553}}$ & 0.3814 $_{\pm \text{ 0.2748}}$ & 0.4918 $_{\pm \text{ 0.2577}}$    \\ 
	 & & 2   
	    & {0.0493} $_{\pm \text{ 0.0356}}$ & 0.0791 $_{\pm \text{ 0.0936}}$ & 0.3500 $_{\pm \text{ 0.1125}}$ & 0.7080 $_{\pm \text{ 0.2659}}$    \\ 
	 & & 3   
	    & {0.0530} $_{\pm \text{ 0.0420}}$ & 0.2618 $_{\pm \text{ 0.0744}}$ & 0.4034 $_{\pm \text{ 0.1099}}$ & 0.7462 $_{\pm \text{ 0.2774}}$    \\ 
	 & & 4   
	 	& {0.0766} $_{\pm \text{ 0.0590}}$ & 0.4727 $_{\pm \text{ 0.0668}}$ & 0.3190 $_{\pm \text{ 0.1304}}$ & 0.8322 $_{\pm \text{ 0.1388}}$    \\ 
	 & & 5   
	 	& {0.0456} $_{\pm \text{ 0.0321}}$ & 0.5323 $_{\pm \text{ 0.0994}}$ & 0.3512 $_{\pm \text{ 0.0602}}$ & 0.8491 $_{\pm \text{ 0.0903}}$  \\ 
	 \hline 
	
	\multirow{17}{*}{TPR} 
	 & \multirow{5}{*}{sample size} 
	 	 & 200 
	    & {0.6966} $_{\pm \text{ 0.0822}}$ & 0.5200 $_{\pm \text{ 0.1222}}$ & 0.2166 $_{\pm \text{ 0.1258}}$ & 0.0933 $_{\pm \text{ 0.0466}}$    \\ 
	 & & 400   
	    & {0.7633} $_{\pm \text{ 0.0622}}$ & 0.5833 $_{\pm \text{ 0.1614}}$ & 0.3833 $_{\pm \text{ 0.0933}}$ & 0.1133 $_{\pm \text{ 0.0600}}$    \\ 
	 & & 600   
	    & {0.8800} $_{\pm \text{ 0.0452}}$ & 0.7433 $_{\pm \text{ 0.1135}}$ & 0.3400 $_{\pm \text{ 0.1651}}$ & 0.0666 $_{\pm \text{ 0.0471}}$    \\ 
	 & & 800   
	    & {0.8966} $_{\pm \text{ 0.0314}}$ & 0.7266 $_{\pm \text{ 0.1200}}$ & 0.2933 $_{\pm \text{ 0.1143}}$ & 0.0766 $_{\pm \text{ 0.0633}}$    \\ 
	 & & 1000   
	    & {0.9633} $_{\pm \text{ 0.0314}}$ & 0.7366 $_{\pm \text{ 0.1828}}$ & 0.5766 $_{\pm \text{0.1770}}$ & 0.0800 $_{\pm \text{ 0.0733}}$    \\ 
	 \cline{2-7}
	 & \multirow{7}{*}{number of variable} 
	 & 5  
	    & {0.9700} $_{\pm \text{ 0.0458}}$ & 0.7700 $_{\pm \text{ 0.2238}}$ & 0.3100 $_{\pm \text{ 0.1220}}$ & 0.3199 $_{\pm \text{ 0.2227}}$    \\ 
	 & & 10   
	    & {0.9350} $_{\pm \text{ 0.0502}}$ & 0.7900 $_{\pm \text{ 0.2009}}$ & 0.3050 $_{\pm \text{ 0.2030}}$ & 0.2100 $_{\pm \text{ 0.0888}}$    \\ 
	 & & 15   
	    & {0.8800} $_{\pm \text{ 0.0452}}$ & 0.7433 $_{\pm \text{ 0.1135}}$ & 0.3400 $_{\pm \text{ 0.1651}}$ & 0.0666 $_{\pm \text{ 0.0471}}$    \\ 
	 & & 20   
	    & {0.9000} $_{\pm \text{ 0.0295}}$ & 0.5150 $_{\pm \text{ 0.1135}}$ & 0.4150 $_{\pm \text{ 0.0653}}$ & 0.1300 $_{\pm \text{ 0.0484}}$    \\ 
	 & & 25   
	    & {0.8820} $_{\pm \text{ 0.0501}}$ & 0.5960 $_{\pm \text{ 0.1123}}$ & 0.4360 $_{\pm \text{ 0.0897}}$ & 0.0880 $_{\pm \text{ 0.0499}}$    \\ 
	 & & 50   
	    & {0.6359} $_{\pm \text{ 0.0427}}$ & 0.5480 $_{\pm \text{ 0.0460}}$ & 0.4320 $_{\pm \text{ 0.0720}}$ & 0.0270 $_{\pm \text{ 0.0126}}$    \\ 
	 & & 100   
	    & {0.4350} $_{\pm \text{ 0.0336}}$ & 0.3065 $_{\pm \text{ 0.0492}}$ & 0.3400 $_{\pm \text{ 0.0999}}$ & 0.0120 $_{\pm \text{ 0.0060}}$    \\  
	 \cline{2-7}
	 & \multirow{5}{*}{noise variance} 
	 & 1  
	    & {0.9100} $_{\pm \text{ 0.0448}}$ & 0.9266 $_{\pm \text{ 0.0442}}$ & 0.2966 $_{\pm \text{ 0.2243}}$ & 0.2366 $_{\pm \text{ 0.1277}}$    \\ 
	 & & 2   
	    & {0.8966} $_{\pm \text{ 0.0504}}$ & 0.6666 $_{\pm \text{ 0.2708}}$ & 0.4100 $_{\pm \text{ 0.0683}}$ & 0.0933 $_{\pm \text{ 0.0628}}$    \\ 
	 & & 3   
	    & {0.8800} $_{\pm \text{ 0.0452}}$ & 0.7433 $_{\pm \text{ 0.1135}}$ & 0.3400 $_{\pm \text{ 0.1651}}$ & 0.0733 $_{\pm \text{ 0.0592}}$    \\ 
	 & & 4   
	 	& {0.8900} $_{\pm \text{ 0.0395}}$ & 0.6400 $_{\pm \text{ 0.1768}}$ & 0.3766 $_{\pm \text{ 0.1075}}$ & 0.0500 $_{\pm \text{ 0.0401}}$    \\ 
	 & & 5   
	 	& {0.8100} $_{\pm \text{ 0.0578}}$ & 0.4900 $_{\pm \text{0.1350}}$ & 0.3533 $_{\pm \text{ 0.1284}}$ & 0.0800 $_{\pm \text{ 0.0600}}$  \\ 
	 \hline

\end{tabular}

\end{center}
\end{table*}

}

\bibliographystyle{IEEEtran}
\bibliography{IEEEabrv,ref}



\begin{IEEEbiography}
[{\includegraphics[width=1in, height=1.25in, clip, keepaspectratio]{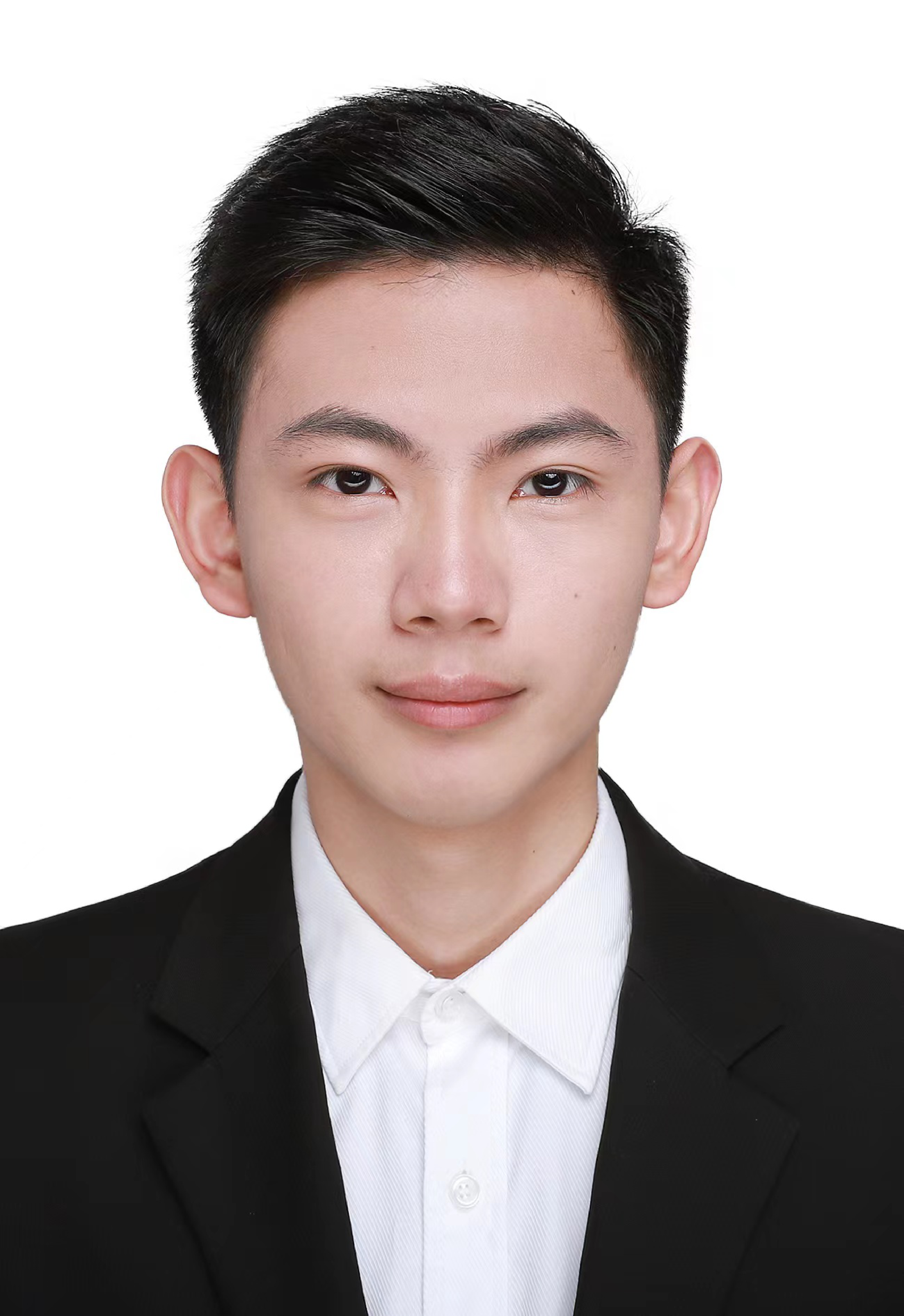}}]{Weilin Chen} received the B.S. degree in software engineering from Guangdong University of Technology, Guangzhou, China, in 2020, where he is currently pursuing the Ph.D. degree with the School of Computer. His current research interests include causal inference and machine learning.
\end{IEEEbiography}

\begin{IEEEbiography}
[{\includegraphics[width=1in, height=1.25in, clip, keepaspectratio]{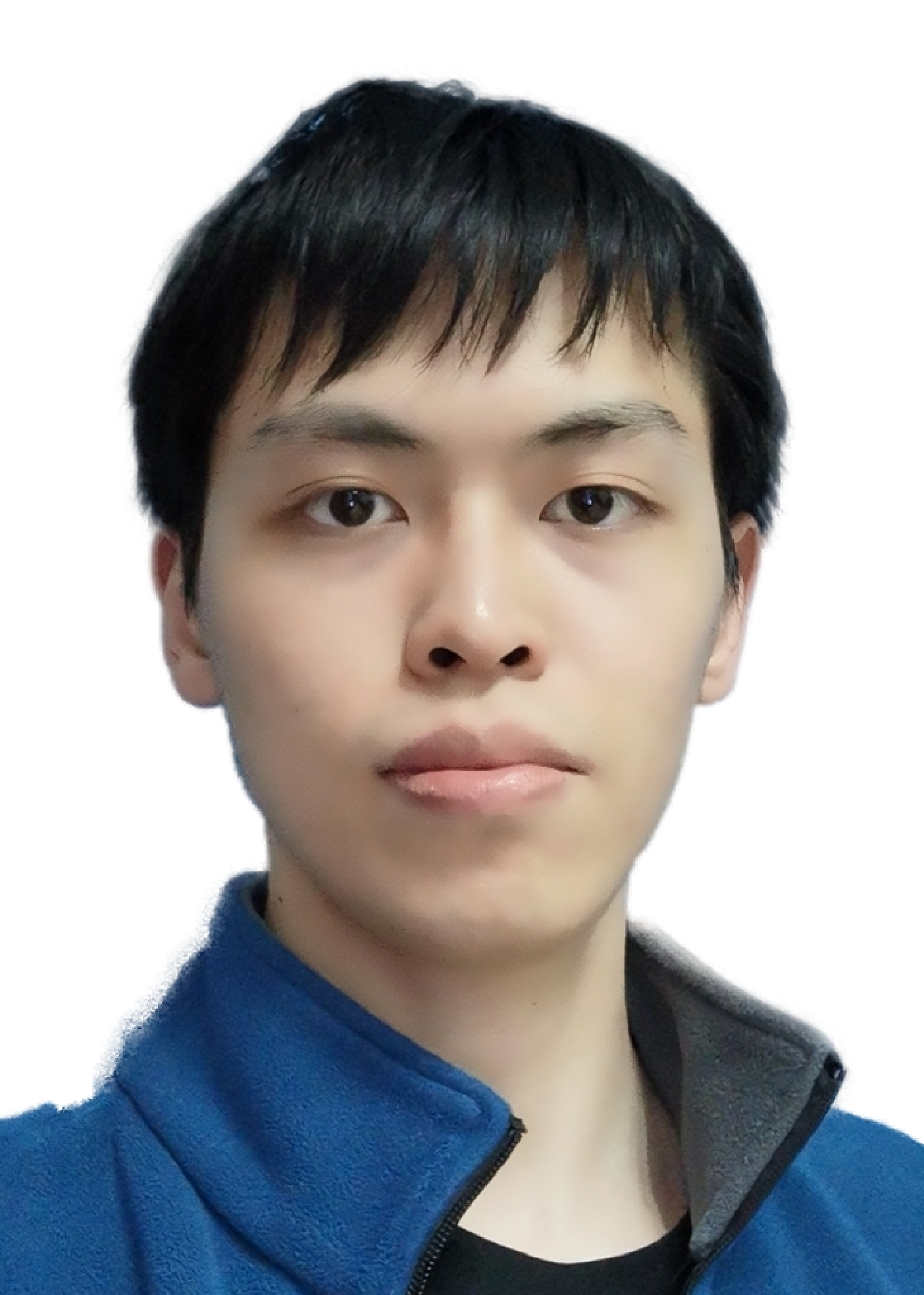}}]{Jie Qiao} received the Ph.D. from Guangdong University of Technology, school of computer science, in 2021. He is currently a postdoctoral researcher in Guangdong University of Technology. His research interests include causal discovery and causality-inspired machine learning.
\end{IEEEbiography}

\begin{IEEEbiography}[{\includegraphics[width=1in, height=1.25in, clip, keepaspectratio]{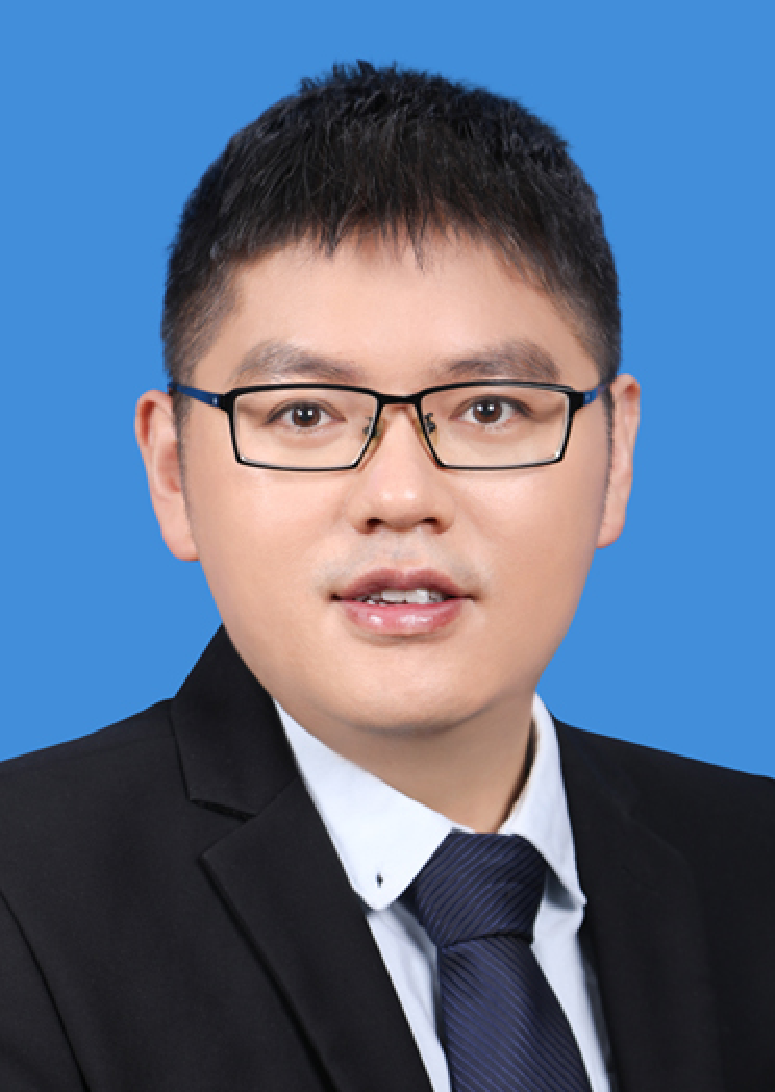}}]{Ruichu Cai} (M'17) is currently a professor in the school of computer science and the director of the data mining and information retrieval laboratory, Guangdong University of Technology. He received his B.S. degree in applied mathematics and Ph.D. degree in computer science from South China University of Technology in 2005 and 2010, respectively. 
 
His research interests cover various topics, including causality, deep learning, and their applications. He was a recipient of the National Science Fund for Excellent Young Scholars, the Natural Science Award of Guangdong, and so on awards. He has served as the action editor of Neural Networks, the area chair of ICML 2022-2024, NeurIPS 2022-2024, ICLR 2024 and UAI 2022-2024, the senior PC of AAAI 2019-2022, IJCAI 2019-2022, and so on. He is now a senior member of CCF and IEEE.

\end{IEEEbiography}
\vspace{-8ex}

\begin{IEEEbiography}[{\includegraphics[width=1in, height=1.25in, clip, keepaspectratio]{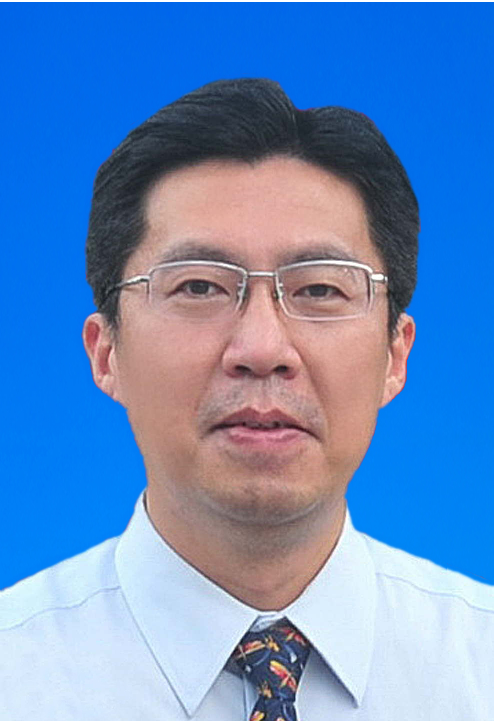}}]{Zhifeng Hao} received his B.S. degree in Mathematics from the Sun Yat-Sen University in 1990, and his Ph.D. degree in Mathematics from Nanjing University in 1995. He is currently a Professor in the School of Computer, Guangdong University of Technology, and College of Science, Shantou University. 
	
His research interests involve various aspects of Algebra, Machine Learning, Data Mining, Evolutionary Algorithms.
\end{IEEEbiography}

\vfill

\end{document}